\renewcommand{\thefigure}{\arabic{figure}}
\newtheorem{theorem}{Theorem}
\newtheorem{lemma}{Lemma}
\newtheorem{corollary}{Corollary}
\newtheorem{assumption}{Assumption}
\def\spacingset#1{\renewcommand{\baselinestretch}%
{#1}\small\normalsize} \spacingset{1}
\newcommand{\argmin}{\operatorname*{\arg\min}}
\newcommand{\E}{\operatorname{E}} 
\newcommand{\Var}{\operatorname{Var}}
\newcommand{\bb}{\mathbb}
\newcommand{\R}{\bb R}
\newcommand{\Y}{\bm Y}
\newcommand{\y}{\bm y}
\newcommand{\x}{\bm x}
\newcommand{\Z}{\bm Z}
\newcommand{\X}{\bm X}
\newcommand{\U}{\bm U}
\begin{document}
\renewcommand{\thefigure}{\arabic{figure}}

\title{Generative Distribution Prediction: A Unified Approach to Multimodal Learning
\thanks{This work was supported in part by NSF grant DMS-1952539 and NIH grants
R01AG069895, R01AG065636, R01AG074858, U01AG073079. (Corresponding author: Xiaotong Shen.) }}

\author{Xinyu Tian  \thanks{Xinyu Tian is with the School of Statistics, University of Minnesota, MN, 55455 USA (email:
tianx@umn.edu)}, Xiaotong Shen \textsuperscript{\orcidlink{0000-0003-1300-1451}} \thanks{Xiaotong Shen is with the School of Statistics, University of Minnesota, MN, 55455 USA (email: xshen@umn.edu)}.}

\date{}

\maketitle
\begin{abstract}
Accurate prediction with multimodal data—encompassing tabular, textual, and visual inputs or outputs—is fundamental to advancing analytics in diverse application domains. Traditional approaches often struggle to integrate heterogeneous data types while maintaining high predictive accuracy. We introduce Generative Distribution Prediction (GDP), a novel framework that leverages multimodal synthetic data generation—such as conditional diffusion models—to enhance predictive performance across structured and unstructured modalities. GDP is model-agnostic, compatible with any high-fidelity generative model, and supports transfer learning for domain adaptation. We establish a rigorous theoretical foundation for GDP, providing statistical guarantees on its predictive accuracy when using diffusion models as the generative backbone. By estimating the data-generating distribution and adapting to various loss functions for risk minimization, GDP enables accurate point predictions across multimodal settings. We empirically validate GDP on four supervised learning tasks—tabular data prediction, question answering, image captioning, and adaptive quantile regression—demonstrating its versatility and effectiveness across diverse domains.
\end{abstract}

\setlength{\baselineskip}{0.167 in}

\section{Introduction}
The advent of big data and the proliferation of diverse data sources have catalyzed a growing interest in multimodal learning. Multimodal data combines information from multiple sources or modalities. Each modality offers unique insights into the underlying phenomena, and integrating these heterogeneous data types holds the potential for more comprehensive models and enhanced predictive performance \cite{ngiam2011multimodal, baltruvsaitis2019multimodal}.

In contexts such as healthcare, the integration of medical records, imaging, and genomics has shown potential in enhancing outcomes \cite{esteva2017dermatologist}. For applications like autonomous driving, combining data from cameras, lidar, and radar promotes superior decision-making \cite{chen2017multi}. Additionally, in domains such as social media analysis and emotion recognition, merging text, images, and physiological signals offers substantial benefits \cite{wang2015classification, cowie2003describing}. These applications underscore the necessity for models capable of effectively processing multimodal data. However, the integration of heterogeneous modalities poses significant challenges due to variations in structural characteristics, representation, and statistical properties \cite{ramachandram2017deep, baltruvsaitis2019multimodal}. For instance, images are characterized by high dimensionality, text is inherently sequential, and tabular data is structured, which complicates the fusion process.

Traditional methods often utilize prediction models designed for single-modal or tabular data. The complexity of unstructured data, including challenges like high dimensionality and the absence of natural structure in embeddings, further obstructs multimodal learning \cite{Dai2022Embedding}. Thus, these methods find it difficult to effectively manage heterogeneous data types, limiting their applicability in multimodal analysis.

To address these difficulties, we propose \textit{Generative Distribution Prediction} (GDP), a framework that facilitates the integration of multimodal data through conditional synthetic data generation. GDP exploits the core concept that generative models, such as diffusion models
 \cite{sohl2015deep, dhariwal2021diffusion, ho2020denoising, zhang2023text, lin2023diffusion, yuan2023spatio, kotelnikov2023tabddpm, zheng2022diffusion, kim2022stasy} and normalizing flows \cite{kingma2018glow} , capture more expressive features by identifying patterns and dependencies typically overlooked by supervised models, which are often limited to specific distribution characteristics like conditional means and quantiles. The central thesis of this work is that by accurately modeling the data generation process, generative models not only improve the generalizability of supervised learning but also enhance the effectiveness and efficiency of the learning process.

In contrast to conventional models that are focused on specific tasks or modalities, GDP enables the integration of tabular data with unstructured modalities into a unified representation, employing a generative model such as diffusion, augmented with shared embedding mechanisms. By implementing transfer learning, we fine-tune the target model with a pre-trained source model, optimizing embedding sizes to boost accuracy for downstream tasks. This strategy harnesses knowledge from large-scale models trained on extensive datasets while being tailored to the specific requirements of multimodal integration.

GDP presents several distinct advantages. First, it facilitates multimodal data prediction within a single generative modeling framework, enabling the generation of synthetic data that mirrors the original distribution while providing point predictions through risk minimization, adaptable to any loss function. Second, it accommodates mixed data types for supervised tasks by modeling the conditional distribution of variables of interest. This enables GDP to capture complex relationships across different modalities, significantly improving predictive performance and generalization. Third, it enhances robustness and generalizability by incorporating beneficial variability through synthetic data generation, empowering the model to handle new data more effectively. By integrating these features, GDP provides an effective solution for multimodal data integration and predictive modeling.

The contributions of this article are as follows:

\begin{enumerate}
\item \textbf{Generative Distribution Prediction for multimodal supervised learning:}
This paper introduces the Generative Distribution Prediction framework, an innovative approach to multimodal supervised learning that leverages advanced generative models—such as diffusion models and large language-image models—for conditional synthetic data generation. Unlike traditional methods that focus on point predictions, GDP generates predictive distributions, offering insights into associated uncertainty and markedly enhancing predictive accuracy. By focusing on the data-generating distribution, GDP outperforms conventional risk minimization techniques and excels in complex tasks like question answering and image captioning. As shown in Section~\ref{numerical}, GDP consistently outperforms standard regression methods, particularly in scenarios involving tabular responses with multimodal predictors, such as domain adaptation and quantile regression. Moreover, GDP proves effective in unstructured response settings by capturing intricate cross-modal relationships, positioning itself as a robust, scalable, and versatile framework for multimodal supervised learning challenges.

\item \textbf{Theoretical foundations:}
This paper establishes a rigorous theoretical foundation for GDP, highlighting its advantages over traditional predictive models. We identify two key performance factors: (1) generation error, measuring the discrepancy between synthetic and original data distributions, and (2) a controllable synthetic sampling error. Our analysis provides theoretical insights and practical strategies to enhance GDP’s accuracy. Leveraging transfer learning, GDP integrates prior knowledge to improve robustness and uncertainty quantification. Additionally, we provide statistical guarantees for GDP’s performance with diffusion models by bounding generation error in multimodal settings. These capabilities make GDP a powerful tool for real-world multimodal learning challenges.

\item \textbf{Advancements in domain adaptation:}
We propose a novel domain adaptation strategy within the GDP framework, employing a dual-level embedding mechanism to bridge source and target domains. GDP minimizes distributional shifts and adapts effectively to target domains while maintaining predictive accuracy. This dual-level embedding captures cross-domain relationships more effectively than traditional single-modal approaches, which often struggle with domain shifts. Experimental results demonstrate GDP’s superior generalization in shifted domains, particularly with multimodal data. As a result, GDP establishes itself as a powerful solution for navigating complex real-world environments and dynamic data scenarios.
\end{enumerate}

The rest of the paper is structured as follows: Section~\ref{GDP} elaborates on the GDP framework for multimodal supervised learning. Section~\ref{sec: theory} presents theoretical error bounds for the accuracy of point predictions. Section~\ref{sec: gl} explores multimodal learning and domain adaptation using dual-level shared embeddings, facilitating transfer learning between source and target tasks. Building on this foundation, we develop multimodal diffusion models to illustrate the benefits of transfer learning. Section~\ref{numerical} compares the performance of GDP against state-of-the-art supervised methods across various tasks, including tabular regression with domain adaptation, question answering, image captioning, and adaptive quantile regression. Finally, Section~\ref{discussion} examines the broader implications of our methodology and concludes the paper. 

The appendix provides technical details, with Section~\ref{sec: proofs} presenting theoretical proofs, Section~\ref{sec:experiment} outlining the numerical experiment settings in detail, and Section~\ref{sec: diffusion} exploring diffusion models for conditional generation with transferred information, as well as analyzing generation error for statistical guarantees.

\section{Generative distribution prediction}
\label{sec: GDP}

In multimodal supervised learning, the objective is to predict the outcomes of response variables (outputs) based on predictors (inputs) for new, unseen data by learning the underlying relationships from paired input-output data. A key challenge in this framework is domain adaptation, where a supervised model trained on the source domain must adapt to perform well in a different but related target domain. For example, a credit scoring model trained on data from a high-risk population adapts to a low-risk population where defaults are rare. This scenario, known as response shift, occurs when the conditional relationship between input features (covariates) and the response variable remains consistent across domains. Still, the distribution of the response variable differs between the source and target domains. Similarly, covariate shift arises when the marginal distribution of the covariates changes between domains, while the conditional distribution of the response given the covariates remains unchanged \cite{sugiyama2007covariate}. Domain adaptation also encompasses the standard case of supervised learning when the data distribution is consistent across both domains.

To predict the outcomes of response variables $\Y_t$ given a new predictor value $\X_t=\x_t^{new}$,
we consider the predictive probability $P_{\y_t|\x_t^{\text{new}}}$, the conditional probability of 
$\Y_t$ given $\X_t=\x_t^{new}$, with subscript $t$ denoting the target task. Here $\Y_t$  
and $\X_t$ can be tabular, unstructured such as text, or both. This probability,
supported by decision theory \cite{Berger1985}, captures the direct relationship between 
$\X_t$ and $\Y_t$, ensuring accurate predictions by accounting for how different values 
of $\X_t$ affect the conditional probability of $\Y_t$.

\subsection{Generative distribution prediction}
\label{GDP}

Generative Distribution Prediction (GDP) generates predictive distributions and corresponding point predictions by synthesizing data to replicate the conditional distribution of the response given the predictors. Although GDP is capable of providing uncertainty measures from these predictive distributions, this article focuses exclusively on point prediction.
The GDP methodology involves two key steps:

\textbf{Step 1: Constructing a conditional generator for domain adaptation.}  
Using transfer learning, we fine-tune a target generative model based on a pre-trained source model and the target training sample 
$D = \{(\x_t^i, \y_t^i)\}_{i=1}^{n}$,
leveraging dual-level shared embeddings (DSE) as discussed in Section~\ref{transfer}. In scenarios where transfer learning is not applicable, the conditional generator may be trained directly on $D$.

\textbf{Step 2: Using synthetic data for point prediction.}  
The conditional generator produces a synthetic sample
$D_{\x_t^{\text{new}}} = \{(\tilde{\y}_t^k, \x_t^{\text{new}})\}_{k=1}^m$,
representing the responses $\tilde{\Y}_t\sim  \hat{P}_{{\y}_t | \x_t}(\tilde{\y}_t |\x_t^{\text{new}})$ corresponding to a given predictor value \(\x_t^{\text{new}}\), where $\hat{P}_{{\y}_t | \x_t}$ represents the generation distribution obtained in Step 1. To obtain a point prediction, we first construct an empirical prediction loss $m^{-1} \sum_{k=1}^m \ell( \bm\theta(\x_t^{\text{new}}),\tilde{\y}_t^k)$,
where the loss function \(\ell\) (for example, the squared loss) evaluates the prediction performance based on the synthetic responses and the target parameter $\bm \theta(\x_t^{\text{new}})$ (for example, the expectation). The point prediction is then determined by minimizing this loss:
\begin{equation}
\label{loss}
\hat{\bm \theta}(\x_t^{\text{new}}) = \argmin_{ \bm\theta(\x_t^{\text{new}}) \sim \Theta(\x_t^{\text{new}})} m^{-1} \sum_{k=1}^m \ell(\bm\theta(\x_t^{\text{new}}),\tilde{\y}_t^k).
\end{equation}
Here, the parameter space $\Theta(\x_t^{\text{new}})$ can be set as the sample space of  $\hat{P}_{\y_t | \x_t}(\tilde{\y}_t |\x_t^{\text{new}})$.

This methodology offers a unified prediction framework applicable to both tabular and unstructured data. Section~\ref{sec: theory} will
provides a statistical guarantee for this approach, while Section \ref{numerical} explores the benefits of the GDP framework for prediction and 
compare it with classical prediction methods that minimize \eqref{loss} using the original sample $D$.

\subsection{Theory: GDP's predictive risk}
\label{sec: theory}

This subsection establishes a theoretical foundation linking generation error in synthetic data replication—or estimation error in the global data distribution—to local point predictions within the GDP methodology. It highlights the essential role of precise distribution estimation, achieved through high-fidelity data generation, in minimizing downstream risk within the GDP framework. By ensuring accurate synthetic data generation, the GDP approach facilitates robust predictions across diverse loss functions through risk minimization, as defined in \eqref{loss}. In essence, effectively estimating the data-generating distribution enhances performance across downstream prediction tasks. For instance, high-fidelity synthetic data replication leads to strong predictive performance under various loss functions, including absolute loss, hinge loss, and squared error loss, demonstrating the flexibility and effectiveness of the proposed approach.

To formalize this, the excessive risk $R(\bm{\theta}_0, \bm{\theta})$ quantifies the risk associated with loss $\ell$ in \eqref{loss}:
\begin{equation}
R(\bm{\theta}_0(\x_t), \bm{\theta}(\x_t)) = \E_{\y_t|\x_t} \left[ \ell(\bm{\theta}(\x_t),\Y_t) - \ell(\bm{\theta}_0(\x_t),\Y_t) \right],
\end{equation}
where $\E_{\y_t|\x_t}$ is taken concerning the true conditional distribution of $\Y_t$ given $\x_t$. The Wasserstein-1 distance is used as a metric to assess the accuracy of global distribution estimation, defined as:
\begin{equation}
W(P, Q) = \inf_{\gamma \in \Gamma(P, Q)} \int \|x - y\| \, d\gamma(x, y),
\end{equation}
where $\|\cdot\|$ is the Euclidean distance, and $\Gamma(P, Q)$ is the set of all joint distributions (couplings) with marginals $P$ and $Q$.

We introduce the following regularity conditions on the loss function $\ell$ to connect global distribution estimation accuracy with local point predictions:

\begin{assumption}[Lipschitz over bounded domain]
\label{A-1}
The loss $\ell$ satisfies the Lipschitz condition:
{\small 
\begin{equation}
\sup_{(\x_t, \y_t)} |\ell(\bm{\theta}_1(\x_t), \y_t) - \ell(\bm{\theta}_2(\x_t), \y_t)| \leq \beta \|\bm{\theta}_1 - \bm{\theta}_2\|,
\end{equation}
}
and $\sup_{\x_t}\sup_{\bm{\theta}(\x_t)\in \Theta(\x_t)} \|\bm{\theta}(\x_t)\|_{\infty} \leq c_b$, where $\|\cdot\|_{\infty}$ is the vector sup-norm, and $c_b$ and $\beta$ are positive constants.
\end{assumption}

\begin{assumption}[Variance and mean]
\label{A-2}
For any $\x_t$ and some constant $c_v > 0$:
{\small 
\begin{equation}
\Var_{\y_t|\x_t}(\ell(\bm{\theta}(\x_t), \Y_t) - \ell(\bm{\theta}_0(\x_t), \Y_t)) \leq c_v R(\bm{\theta}_0(\x_t), \bm{\theta}(\x_t)),
\end{equation}
}
where $\Var_{\y_t|\x_t}$ denotes the conditional variance given $\x_t$.  
\end{assumption}

\begin{theorem}[GDP's excessive risk]
\label{thm_gdi}
Under Assumptions \ref{A-1}--\ref{A-2}, the excessive risk is bounded as follows:
\begin{equation}
\E R(\bm{\theta}_0, \hat{\bm{\theta}}) \leq  \underbrace{c_1\E W(\hat{P}_{\y_t|\x_t}, P_{\y_t|\x_t})}_{\text{Generation error}} +  \underbrace{c_2m^{-\frac{1}{2}} \log m}_{\text{Synthetic sampling error}},
\end{equation}
where $c_1 = 1+\beta$ and $c_2 = 2^{15} \frac{c_v^{1/2}}{d_{\bm{\theta}}}$, with $d_{\bm{\theta}}$ denoting the dimension of $\bm{\theta}$, and
$\E$ denotes the expectation with respect to the randomness. 
Finally, if $\E W(\hat{P}_{\y_t|\x_t}, P_{\y_t|\x_t})\leq \gamma_n$ with $\gamma_n$ 
representing the generation error bound in terms of training sample size $n$,  then 
  $\E R(\bm{\theta}_0, \hat{\bm{\theta}}) \leq c_1 \gamma_n + c_2 m^{-\frac{1}{2}} \log m$. 
As $m \rightarrow \infty$, $\E R(\bm{\theta}_0, \hat{\bm{\theta}})\leq c_1 \gamma_n$.
\end{theorem}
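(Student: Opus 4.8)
The plan is to split the excess risk of the synthetic-sample minimizer into a \emph{distribution mismatch} part, governed by the Wasserstein distance between the true and generative conditional laws, and a \emph{synthetic sampling} part, governed by concentration of the $m$ synthetic draws; the two ``Finally'' claims then follow by substitution and by sending $m\to\infty$. Fix $\x_t=\x_t^{\text{new}}$ and abbreviate $P=P_{\y_t|\x_t}$, $\hat P=\hat P_{\y_t|\x_t}$. On $\Theta(\x_t)$ define the target risk $R_0(\bm{\theta})=\E_{\y_t|\x_t}\ell(\bm{\theta},\Y_t)$, the generative risk $\hat R(\bm{\theta})=\E_{\tilde\y_t\sim\hat P}\ell(\bm{\theta},\tilde\Y_t)$, and the empirical synthetic risk $\bar R_m(\bm{\theta})=m^{-1}\sum_{k=1}^m\ell(\bm{\theta},\tilde\y_t^k)$. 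Let $\bm{\theta}_0$ minimize $R_0$ over $\Theta(\x_t)$ (so $R(\bm{\theta}_0,\hat{\bm{\theta}})=R_0(\hat{\bm{\theta}})-R_0(\bm{\theta}_0)\ge 0$), let $\hat{\bm{\theta}}_*$ minimize $\hat R$ over $\Theta(\x_t)$, and recall $\hat{\bm{\theta}}$ minimizes $\bar R_m$.

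The starting point is the elementary identity
\begin{align*}
R_0(\hat{\bm{\theta}})-R_0(\bm{\theta}_0)
&=\big[R_0(\hat{\bm{\theta}})-\hat R(\hat{\bm{\theta}})\big]+\big[\hat R(\bm{\theta}_0)-R_0(\bm{\theta}_0)\big]\\
&\quad{}+\big[\hat R(\hat{\bm{\theta}})-\hat R(\bm{\theta}_0)\big].
\end{align*}
The first two brackets form the generation error: each has the form $\pm\big(\E_P\ell(\bm{\theta},\cdot)-\E_{\hat P}\ell(\bm{\theta},\cdot)\big)$ at a point $\bm{\theta}\in\Theta(\x_t)$, hence is at most $\sup_{\bm{\theta}\in\Theta(\x_t)}\big|\E_P\ell(\bm{\theta},\cdot)-\E_{\hat P}\ell(\bm{\theta},\cdot)\big|$. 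Since $\Theta(\x_t)$ is bounded (Assumption~\ref{A-1}), the admissible responses lie in a bounded set on which $\y_t\mapsto\ell(\bm{\theta},\y_t)$ is Lipschitz (with a constant comparable to $\beta$ for the losses of interest -- squared, absolute, hinge), so the Kantorovich--Rubinstein duality bounds this supremum by a constant multiple of $W(\hat P,P)$. Taking expectations over the randomness of the generative model and tracking the constant gives $c_1\,\E W(\hat P_{\y_t|\x_t},P_{\y_t|\x_t})$ with $c_1=1+\beta$.

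The third bracket, $\hat R(\hat{\bm{\theta}})-\hat R(\bm{\theta}_0)$, is at most $\hat R(\hat{\bm{\theta}})-\hat R(\hat{\bm{\theta}}_*)$ because $\hat{\bm{\theta}}_*$ minimizes $\hat R$ over $\Theta(\x_t)$; this is the excess risk, under $\hat P$, of the empirical-risk minimizer over the $m$ synthetic draws, which I would control by a localized empirical-process argument. Using $\bar R_m(\hat{\bm{\theta}})\le\bar R_m(\hat{\bm{\theta}}_*)$ it is at most $\big[\hat R(\hat{\bm{\theta}})-\bar R_m(\hat{\bm{\theta}})\big]+\big[\bar R_m(\hat{\bm{\theta}}_*)-\hat R(\hat{\bm{\theta}}_*)\big]$; cover $\Theta(\x_t)$ (bounded, dimension $d_{\bm{\theta}}$) at scale $\delta$ so that $\log N(\delta)\lesssim d_{\bm{\theta}}\log(c_b/\delta)$, apply Bernstein's inequality at each grid point -- using Assumption~\ref{A-2} to replace the variance of $\ell(\bm{\theta},\tilde\Y_t)-\ell(\bm{\theta}_0,\tilde\Y_t)$ by $c_v$ times the excess risk and Assumption~\ref{A-1} for the boundedness of increments and the discretization error -- then union-bound over the grid, peel over dyadic risk levels, optimize $\delta$, and integrate the tail. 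This yields in expectation the bound $c_2 m^{-1/2}\log m$, with $c_v^{1/2}$ from the Bernstein variance term, $\log m$ from the covering/union bound, and the $d_{\bm{\theta}}$ dependence absorbed into $c_2$. Combining the two parts gives the displayed inequality; substituting $\E W(\hat P_{\y_t|\x_t},P_{\y_t|\x_t})\le\gamma_n$ yields $\E R(\bm{\theta}_0,\hat{\bm{\theta}})\le c_1\gamma_n+c_2 m^{-1/2}\log m$, and since $m^{-1/2}\log m\to 0$ while $\gamma_n$ is independent of $m$, letting $m\to\infty$ gives $\E R(\bm{\theta}_0,\hat{\bm{\theta}})\le c_1\gamma_n$.

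The main obstacle is the generation-error step: Assumption~\ref{A-1} supplies Lipschitz continuity of $\ell$ only in the parameter $\bm{\theta}$, whereas the Wasserstein distance measures closeness of laws in the response space, so bridging this gap requires exploiting the boundedness of $\Theta(\x_t)$ together with the explicit structure of the target losses to obtain Lipschitz control in $\y_t$ before Kantorovich--Rubinstein can be applied. The second delicate point is producing the $m^{-1/2}\log m$ rate as a bound on the \emph{expected} (rather than merely high-probability) excess risk, which needs the peeling/localization step and a careful integration of the concentration tail; the remaining manipulations -- the risk decomposition and the two corollaries -- are routine.
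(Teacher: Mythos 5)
Your proposal follows essentially the same route as the paper: Kantorovich--Rubinstein duality converts the mismatch between $P_{\y_t|\x_t}$ and $\hat P_{\y_t|\x_t}$ into the Wasserstein term with constant $c_1=1+\beta$, and a localized empirical-process argument (covering/Bernstein plus peeling over dyadic risk shells, with Assumption~\ref{A-2} supplying the variance-to-excess-risk link) yields the $m^{-1/2}\log m$ synthetic sampling error --- the paper simply packages this second step as a citation to a Shen--Wong-type large-deviation lemma rather than re-deriving it. The obstacle you flag (Assumption~\ref{A-1} gives Lipschitz continuity in $\bm\theta$ only, not in $\y_t$, so Kantorovich--Rubinstein does not apply verbatim to $\E_P\ell-\E_{\hat P}\ell$) is genuine, but the paper's own proof passes over it silently, so your treatment is if anything the more careful of the two.
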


Theorem \ref{thm_gdi} demonstrates that the accuracy of point predictions, derived via risk minimization as defined in \eqref{loss} on a synthetic sample of size $m$, is determined by two principal components: (i) the generation error inherent in distributional estimation, and (ii) a controllable synthetic sampling error that depends on $m$. The generation error is bounded by the Wasserstein distance,  measuring the distributional shift between the synthetic distribution and the data-generating process induced by a specific generator. As $m$ increases, the synthetic sampling error decreases,  ensuring that the GDP's prediction accuracy is bounded by the generator's Wasserstein error.

\noindent{\bf Risk minimization.}  This result underscores the critical role of precise distribution estimation in risk minimization. A well-estimated distribution inherently facilitates effective risk minimization, enabling optimal performance across a wide range of loss functions. This adaptability marks a paradigm shift: prioritizing accurate distribution estimation enhances downstream predictive performance across various tasks. In Section \ref{quantile}, we further illustrate the significance of this approach by demonstrating its application to adaptive
quantile regression, highlighting its capacity to accommodate diverse predictive objectives.

\noindent\textbf{Impact of synthetic sample size $m$.}  
Theorem \ref{thm_gdi} underscores the significance of the synthetic sample size $m$ in tightening the bound on GDP's excess risk. When $m$ is sufficiently large, the synthetic sampling error diminishes, leaving the excess risk of GDP upper-bounded by the Wasserstein generation error. Empirical results in Sections \ref{subsec-img} and \ref{QnA} support this claim, demonstrating that selecting an appropriate $m$ enhances performance compared to the default choice of $m = 1$ while effectively managing computational cost. This finding highlights the importance of striking a careful balance between computational efficiency and prediction accuracy, with $m$ playing a critical role in this trade-off.

\section{Multimodal learning and domain adaptation}
\label{sec: gl}

This section focuses on diffusion models, though the GDP framework broadly applies to other generative models.

\subsection{Generation via transfer learning with dual-level embedding}
\label{transfer}

To enhance target generation, we leverage a source generation task, transferring knowledge from a pre-trained model to improve performance. The transfer process involves aligning the conditional probability distributions of the source $P(\Y_s|\X_s)$ and the target $P(\Y_t|\X_t)$ using the dual-level shared embeddings (DSE) framework. This alignment allows effective transfer even when the distributions of $(\Y_s, \X_s)$ and $(\Y_t, \X_t)$ differ, facilitating efficient generation with limited target domain data.

The concept of dual-level shared embeddings (DSE) between source and target tasks extends the shared embedding 
concept for transfer learning in \cite{tian2024enhancing}.

\noindent\textbf{Dual-level shared embeddings.} 
Consider an encoder-decoder system, where an encoder \( f \) maps a random vector \( \Y_j \) into a latent embedding vector \( \U_j \), potentially of different dimensionality. A decoder \( g \) then reconstructs the original data from the embedding \( \U_j \), ensuring that \( \Y_j = g(\U_j) \) for \( j = s, t \). The encoder and decoder act as transformations between the semantic latent space and the observed data space, which is especially important for unstructured data.  In an image captioning task, the encoder and decoder translate captions to and from a numerical embedding space. Depending on the styles of captions and photos, the semantic representations of the images may vary in distribution.

The DSE model assumes the conditional distribution of \( \U_j \) given \( \X_j \) $ P_{\bm{u}_j|\x_j}(\cdot|\x_j)$ 
to follow a shared structure across tasks through an embedding function \( h \):
\begin{equation}
\label{dse}
P_{\bm{u}_j|\x_j}(\cdot|\x_j) = P_j(\cdot, h(\x_j)), \quad j = s, t.
\end{equation}

The DSE model in \eqref{dse} captures the shared structure between source and target tasks. 
Based on \eqref{dse}, given the pre-estimated $(\hat{f},\hat{g},\hat{h})$ from the source task, we can learn the distribution of $\U_t$ in the latent space. Let $\theta_t$ parametrize the conditional generator and $P_t(\bm u_t,h(\x_t))$,
We estimate the target-specific latent parameter  by minimizing the target generation loss using the target samples $\{(\bm{y}_t^i, \bm{x}_t^i)\}_{i=1}^{n}$:
\begin{equation}
\label{target-latent}
\hat{{\theta}}_t=\arg\min_{  \theta_t \in \Theta_t} 
\sum_{i=1}^{n} \ell_t \left(\hat{f}\left(\bm{y}_t^i\right), \hat{h}(\bm{x}_t^i);  \theta_t \right),
\end{equation}
where $\Theta_t$ is the parameter space for $ \theta_t$, while $\ell_t$ is the generation loss such as the diffusion score matching loss defined in Section \ref{sec: diffusion} of the appendix.

Given the learned parameters $(\hat{ \theta}_t, \hat{h})$, the generator produces synthetic samples $(\tilde{\Y}_t^i, \X_t)_{i=1}^m$ given $\X_t$ via the decoder $\hat{g}$. Specifically, it completes the generation process by computing $\tilde{\Y}_t^i = \hat{g}(\tilde{\U}_t^i)$. Here, we derive $\hat{g}$ from the source task through shared structures and draw the samples $(\tilde{\U}_t^i, \X_t)_{i=1}^m$  from the distribution $P_t(\cdot; \hat{h}(\X_t), \hat{ \theta}_t)$ given $\X_t$. The distribution $P_t(\cdot; \hat{h}(\X_t), \hat{ \theta}_t)$ denotes the generator's output distribution parameterized by $\hat{ \theta}_t$.

\subsection{Unified diffusion generation for tabular and unstructured data}
\label{sec: diff}

This subsection introduces a diffusion model for synthesizing multimodal samples by integrating structured tabular data with unstructured data (e.g., text and images) for conditional generation. While diffusion models have demonstrated remarkable performance in generating unstructured data \cite{ho2020denoising, song2021score} and tabular data \cite{kotelnikov2023tabddpm}- the literature has thus far treated these domains separately. To our knowledge, no prior work has combined these modalities under a unified diffusion framework. Our approach fills this gap by employing a conditional diffusion framework with shared covariate embeddings and shared response encoder-decoder (as discussed in Section \ref{transfer}), facilitating transfer learning and enabling smooth adaptation between source and target domains. Moreover, to effectively merge structured and unstructured data, we incorporate multiple embedding layers within the embedding function, with adjustable embedding sizes optimized for performance across various downstream tasks.

\begin{figure}[htbp]
  \centering
  \includegraphics[width=0.8\textwidth]{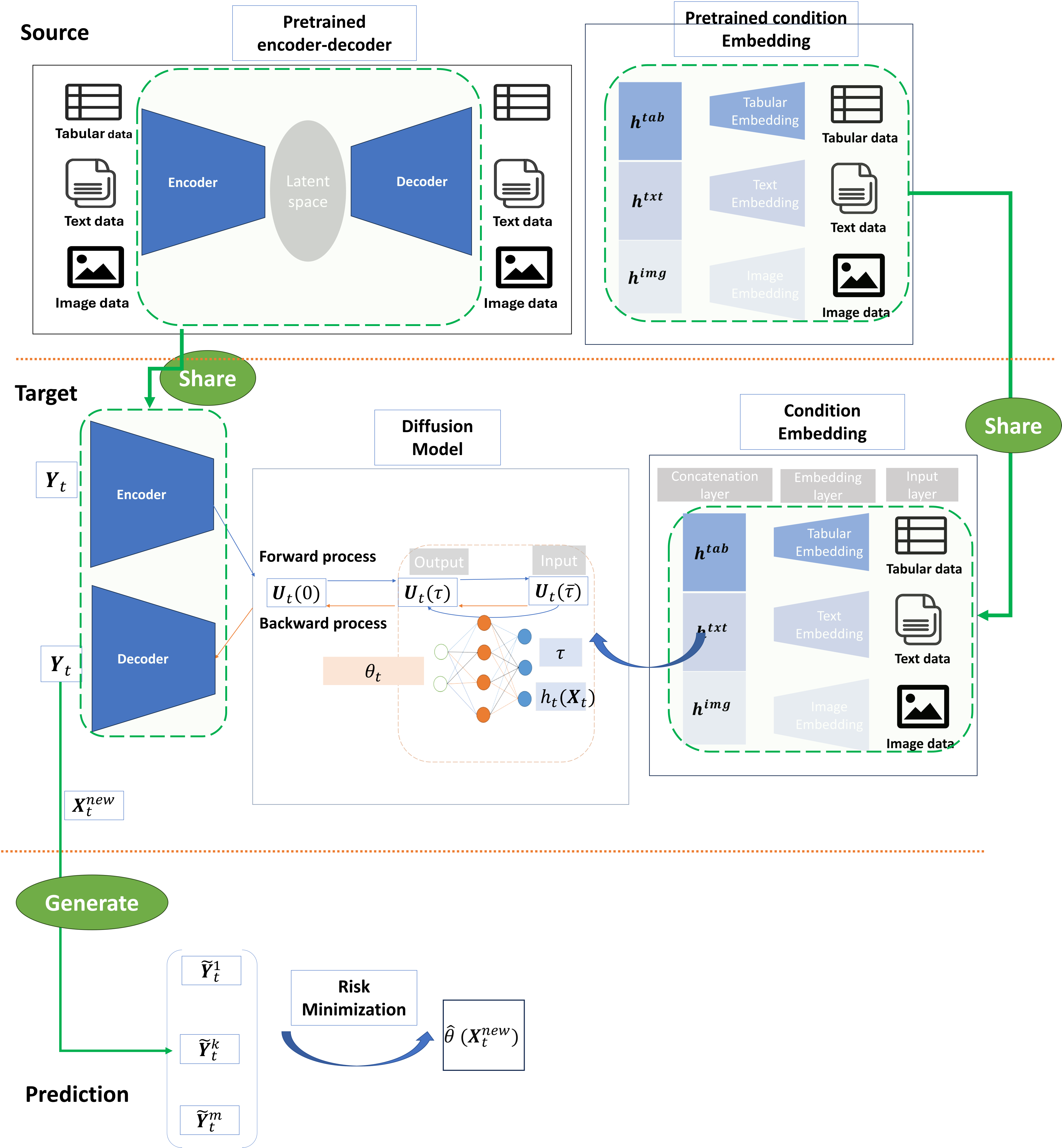}
  \caption{Conditional diffusion models for target domain adaptation through shared representations.} \label{fig_cg}
\end{figure}

Figure \ref{fig_cg} illustrates the model's structure, which enables transfer learning by leveraging both structured (tabular) and unstructured (text and image) data. This architecture supports multimodal supervised learning by converting diverse data types into numerical embeddings via dedicated encoders. For each modality, we implement separate condition embeddings—$h^{tab}$ for tabular data, $h^{txt}$ for text, and $h^{img}$ for image data, then combine them into a unified representation. This integration enhances both multimodal data generation and prediction. Conditional mechanisms provide additional context during the diffusion process, enabling the model to adjust its predictions based on specific input conditions. Meanwhile, the encoder-decoder framework maps response variables into a latent space and then reconstruct the generated representations back into their original formats (tabular, text, or image), completing the generation process.
Key components of the model include shared neural networks $(f, g, h)$ trained on source data. This strategy allows the model to generalize features across domains, improving its capability to handle various data types effectively.

In summary, this diffusion model efficiently supports transfer learning by utilizing shared representations and multimodal embeddings, making it ideal for tasks that require integrating structured and unstructured data across different domains.

\section{Numerical examples}
\label{numerical}

This section rigorously evaluates GDP's performance in tabular and text prediction tasks by assessing its predictive accuracy and benchmarking it against state-of-the-art methods on benchmark datasets and simulated examples. We examine several variants of GDP, including diffusion-GDP (which incorporates diffusion generation with a statistical guarantee), GDP integrated with large pre-trained models, and transfer-GDP enhanced through transfer learning.

Our evaluation encompasses four supervised tasks that yield tabular and text outputs using a combination of tabular, text, and image predictors. Specifically, Subsections \ref{domain-adaptation}–\ref{quantile} detail experiments: (i) domain adaptation using diffusion models for predicting Yelp reviews across source and target domains; (ii) image captioning with diffusion models; (iii) question answering utilizing a large language model; and (iv) adaptive quantile regression across multiple quantiles using diffusion models.

A detailed description of the experiments will be presented in Section \ref{sec:experiment}.

\subsection{Applying diffusion-GDP for domain adaptation in rating prediction}

\label{domain-adaptation}

This subsection evaluates GDP's predictive performance by leveraging a combination of tabular and unstructured predictors and compares it against several popular supervised methods. These include classical approaches such as logistic regression \cite{cox1958logistic}, Naive Bayes \cite{rish2001naive}, Random Forest \cite{breiman2001random}, and SVM \cite{chang2011libsvm}, as well as the large language model BERT \cite{devlin2018bert}. The evaluation considers two scenarios: source and target learning, which rely solely on source and target data, and domain adaptation. While BERT incorporates external data for domain adaptation during source and target learning, GDP enables effective domain adaptation for target learning by leveraging source data.

To assess these models, we use the Yelp reviews dataset (\url{www.yelp.com/dataset}), a widely recognized benchmark for sentiment analysis and star rating prediction. The dataset consists of over 6.9 million reviews rated on a scale from 1 to 5 stars, where a rating of 5 indicates the highest satisfaction. In addition to the review text, the dataset includes business-related attributes such as categories, locations, and user feedback metrics like ``useful", ``funny", and ``cool'' votes. These attributes provide a rich combination of structured and unstructured data for predictive modeling.

For this study, we utilize the first 100,000 text reviews. To create a domain adaptation scenario, the data was partitioned based on the ``cool'' count, which measures how many users deemed a review ``cool.'' Reviews with a ``cool'' count of zero were designated as the source dataset, comprising 79,984 samples and representing non-enthusiastic reviewers. Reviews with a ``cool'' count greater than zero formed the target dataset, consisting of 20,016 samples and representing enthusiastic reviewers.

The distribution of star ratings between non-enthusiastic (source) and enthusiastic (target) two groups, illustrated in Figure~\ref{fig_dy}, reveals a significant difference. Non-enthusiastic reviewers display a more uniform distribution of star ratings, whereas enthusiastic reviewers tend to assign higher ratings. This disparity presents a challenging domain adaptation problem for predictive models. In this context, domain adaptation is essential for effectively leveraging source data to improve performance on the target task, where the source and target data distributions are evidently different but closely related. In both source and target tasks, the datasets are split into training sets and testing sets with ratio 7:3.

\begin{figure}[!htbp]
\label{distr}
  \centering
  \includegraphics[width=0.75\textwidth]{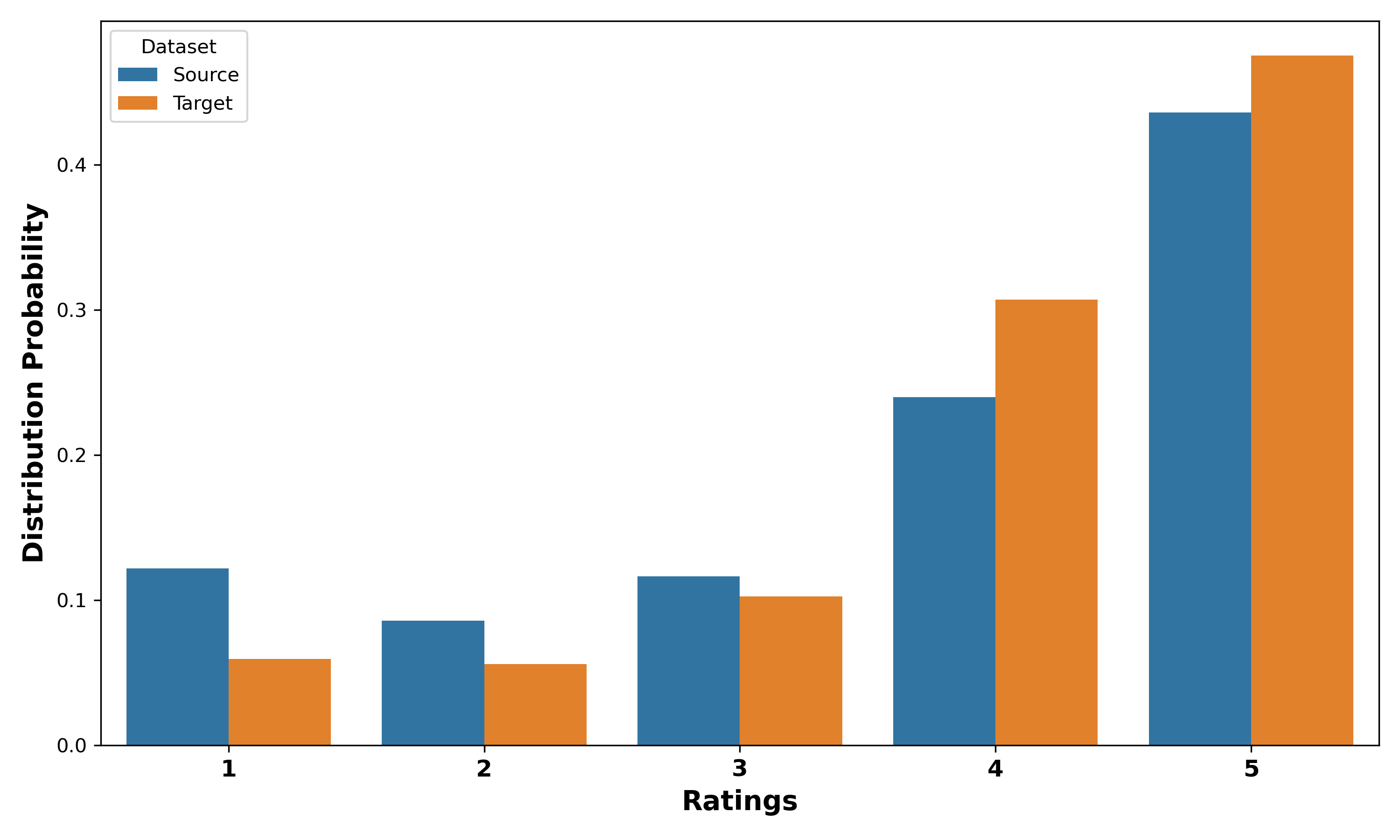}
\caption{Comparison of star rating distributions between non-enthusiastic reviewers (source, blue) and enthusiastic (target, yellow) reviewers.}
\label{fig_dy} 
\end{figure}

 For evalaution, we consider two metrics, classification accuracy and Cohen’s Kappa for both datasets. 
The former refers to the proportion of correctly predicted rating values (1-5), while ``Cohen’s Kappa'' adjusts for the prediction probability agreement by the agreement that could occur purely by chance, which
is particularly useful for evaluating imbalanced situations. 
 
For transfer-GDP, we train a discrete diffusion model \cite{austin2021structured} for categorical response generation given text and use a pre-trained BERT-based embedding model to extract text embeddings from the input for conditional diffusion generation. For GDP, we evaluate synthetic rating modes of samples ($m=50$) conditioned on predictor values using both classification loss and Cohen’s Kappa loss.
For BERT, we fine-tune the model separately for the source and target tasks using source and target data, leveraging the implementation available at \cite{simpletransformers}. For logistic regression, Naive Bayes, Random Forest, and SVM, we use standard machine learning
methods.

\begin{table}[!htbp]
\caption{Model performance on source (non-enthusiastic) and target (enthusiastic) tasks. GDP is tested with (transfer-GDP) and without (nontransfer-GDP) transfer learning, while other methods lack transfer capability. "Accuracy" denotes the percentage of correct labels, and "Cohen's Kappa" measures agreement beyond chance. Best performance is bolded.}
\label{tab:compact_performance}
\setlength{\tabcolsep}{4pt} 
\centering
\begin{tabular}{lcccc}
\toprule
\textbf{Model} & \multicolumn{2}{c}{\textbf{Source}} & \multicolumn{2}{c}{\textbf{Target }} \\
 & \textbf{Accuracy } & \textbf{Kappa} & \textbf{Accuracy} & \textbf{Kappa}   \\
\midrule
Logistic         & 61.52 \% & 0.4685 & 58.88 \% & 0.3779   \\
Naive Bayes      & 58.57 \% & 0.3665 & 54.50 \% & 0.1961   \\
Random Forest    & 53.89 \% & 0.3642 & 51.82 \% & 0.2736   \\
SVM              & 58.80 \% & 0.3732 & 57.05 \% & 0.2914   \\
BERT             & 62.43 \% & 0.4563 & 59.76 \% & 0.3656   \\
Nontransfer-GDP & \textbf{63.21} \%  & \textbf{0.4895}  & 58.10 \% & 0.3670   \\
Transfer-GDP   & &  & \textbf{61.00} \% & \textbf{0.3997}  \\
\bottomrule
\end{tabular}
\end{table}

As shown in Table~\ref{tab:compact_performance}, GDP demonstrated superior performance across both source and target learning tasks, achieving the highest classification accuracy and Cohen's Kappa. BERT ranked second, while traditional methods such as logistic regression, naive Bayes, and random forests performed noticeably worse, highlighting the limitations of classical approaches in this context.

Specifically, for the source dataset, non-transfer GDP improved accuracy by 1.25\% over BERT (from 62.43\% to 63.21\%) and Cohen's Kappa by 7.23\% (from 0.4563 to 0.4895), despite BERT leveraging external data for domain adaptation during source learning. On the target task, GDP, with domain adaptation applied to both source and target data, improved accuracy by 2.07\% over BERT (from 59.76\% to 61.00\%) and Cohen's Kappa by 9.33\% (from 0.3656 to 0.3997), despite BERT’s use of additional external data respectively
for source and target domains. In contrast, GDP without domain adaptation, using only target data, performed slightly worse than BERT for target learning. 

 To compare methods for domain adaptation, we contrast transfer-GDP with nontransfer-GDP for the target task. GDP equipped with domain adaptation via transfer learning showed significant gains over its non-transfer counterpart, improving accuracy by 2.90\% (from 58.10\% to 61.00\%) and Cohen's Kappa by 8.91\% (from 0.3670 to 0.3997). These findings highlight the importance of transfer learning and domain adaptation in addressing learning challenges with limited target data.

The numerical results indicate that GDP achieves higher predictive accuracy, as measured by classification accuracy and agreement metrics, across both source and target domains compared to state-of-the-art competitors. Additionally, its domain adaptation capabilities further enhance predictive accuracy for the target task by leveraging source-domain learning, as evidenced by the superior performance of GDP over alternative methods.
Compared to BERT, which benefits from inherent transfer learning capabilities, GDP demonstrates the added strength of its conditional diffusion models in adapting to varying data distributions. This ability to bridge the gap between non-enthusiastic and enthusiastic reviewers enables GDP to deliver superior domain-adapted predictions. While BERT exhibits strong performance, GDP’s enhancements emphasize the critical role of advanced generative frameworks for tasks involving domain shifts. In contrast, traditional approaches, limited by their inability to adapt to distributional changes, perform significantly worse.

In summary, GDP showcases exceptional adaptability and predictive accuracy, positioning it as a promising solution for complex real-world applications involving heterogeneous and evolving data distributions.

\subsection{Applying GDP with diffusion and large pre-trained models to image captioning}
\label{subsec-img}

This subsection evaluates the performance of the GDP method within a multimodal framework for image captioning. Specifically, we integrate GDP with two distinct multimodal generators—a diffusion model and the Bootstrapping Language-Image Pre-training (BLIP) model \cite{li2022blip}—yielding the diffusion-GDP and BLIP-GDP variants, respectively. Notably, the diffusion-GDP approach is supported by the theoretical results presented in Theorems \ref{thm_gdi} and \ref{thm_ug}, whereas no corresponding theory currently exists for BLIP-GDP.

Modern image captioning typically employs an encoder-decoder framework. A Convolutional Neural Network (CNN)
extracts image features then passes to a decoder—originally an LSTM model \cite{vinyals2015show} 
and more recently a Transformer—to generate captions such as BLIP involving attention mechanisms.

For our experiments, we employ the \texttt{captions\_val2014} subset of the widely recognized COCO Caption dataset (available at \url{http://coco-dataset.org}). We use the first 35,000 samples for training and reserve the remaining 5,504 samples for testing. The COCO Caption dataset is a standard benchmark for image captioning, featuring richly annotated images paired with human-generated captions. Specifically, we train our diffusion model on the designated training set, while we do not fine-tune BLIP since it was pre-trained on the entire COCO Caption dataset. We evaluate both methods on the 5,504 test samples.

In our multimodal diffusion model, as described in Section \ref{sec: diff}, we use SentenceBERT (Sentence-Bidirectional Encoder Representations from Transformers) \cite{reimers2019sentence} to extract sentence embeddings and CLIP (Contrastive Language-Image Pretraining) \cite{radford2021learning} to align textual and visual modalities, together forming the encoder. We then apply a Gaussian diffusion model in the embedding space, leveraging a residual network \cite{he2016deep} to enhance feature representation and maintain model stability. Finally, we use a pre-trained Vec2Text model \cite{morris2024language} to decode the diffused vector embeddings to generate image captions.

To assess alignments between the generated and reference captions, we compute cosine similarity scores between their sentence embeddings, where we obtain these embeddings using models such as Google’s text-embedding-004~\cite{google_text_embedding_004} via APIs. The score ranges from 0 to 1, with 1 indicating complete semantic equivalence. For each image in the test set,  we use the diffusion model to produce $m=10$ candidate captions. The GDP method selects the final caption by minimizing the cosine dissimilarity loss  among 10 candidates, defined as one minus the cosine similarity score, over these candidates, as specified in \eqref{loss}. Finally, we compare the GDP-selected captions with the average performance of all candidate captions from the diffusion model based on their cosine similarity scores with the reference captions.

Figure~\ref{fig-gdp-cap} shows that our multimodal diffusion model (mean: 0.6372, median: 0.6420) performs comparably to the fine-tuned BLIP model (mean: 0.6147, median: 0.6153) when averaging 10 generated captions per image—even though BLIP uses the entire COCO data and additional external data while our diffusion model relies solely on training data. Moreover, after applying GDP selection, the diffusion-GDP and BLIP-GDP methods significantly improve (mean scores of 0.6994 and 0.7099 and median scores of 0.7062 and 0.7203, respectively). This result demonstrates that the GDP selection,  based on risk minimization, consistently chooses captions that better align with the reference, providing a clear advantage over random selection methods.

\begin{figure}[!htbp]
  \centering
  \includegraphics[width=0.75\textwidth]{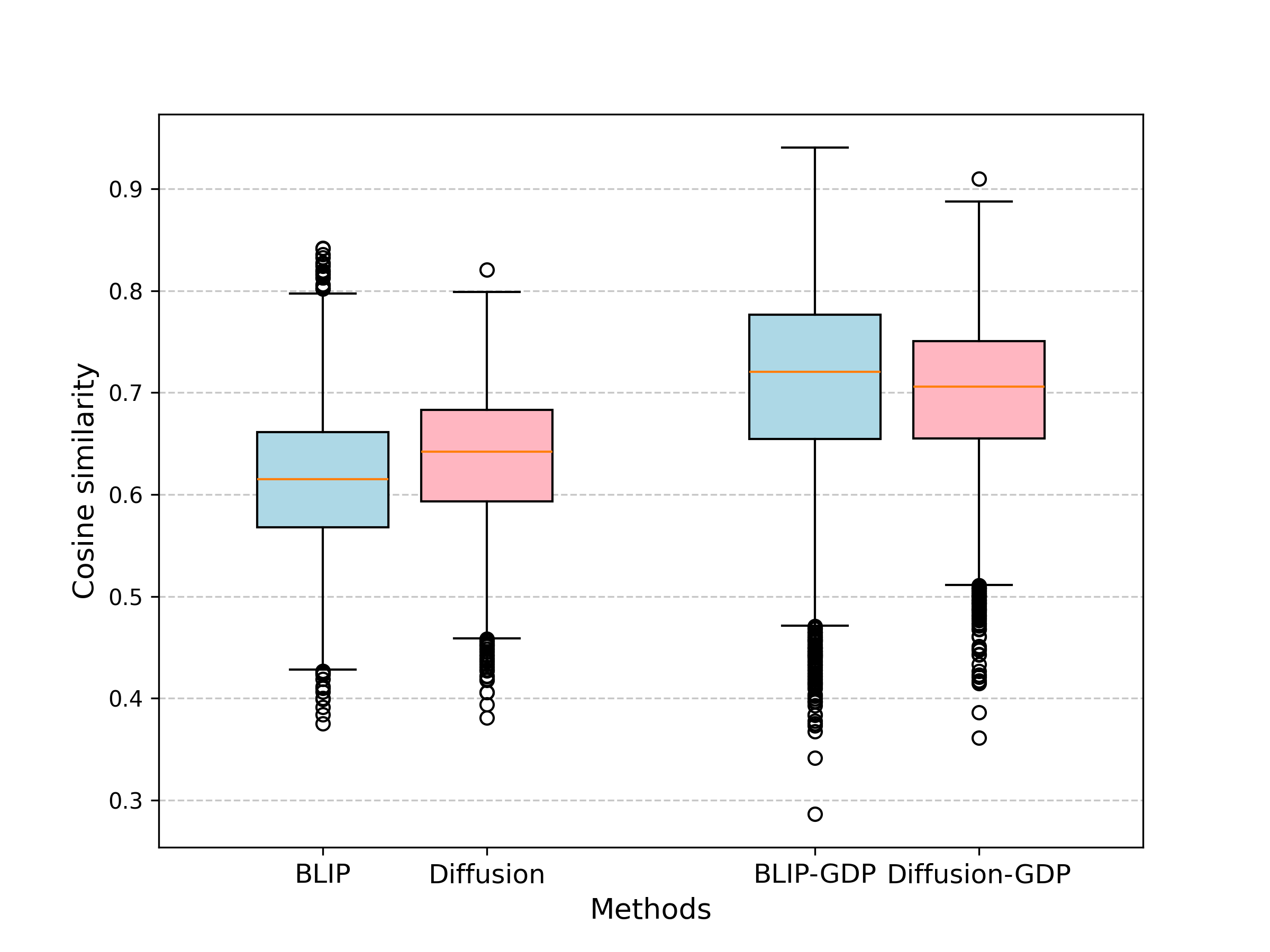}
\caption{Cosine similarity scores for the diffusion method vs. the BLIP model on a test set of 5,054 images. The left panel shows the average similarity between BLIP- or diffusion-generated captions and reference captions across $m=10$ generated captions per image. The right panel shows the average similarity for BLIP-GDP- or diffusion-GDP-selected captions, chosen from the same set of $m=10$ diffusion-generated captions.}
  \label{fig-gdp-cap}
\end{figure}

In Table \ref{tab-image-caption}, an example illustrates how GDP enhances captioning. The GDP method selects captions with high similarity scores generated by diffusion, ensuring better alignment with the image content, while lower scores help filter out low-quality captions that fail to accurately depict the pasta dish in the example image.

\begin{table}[H]
\caption{Example of the GDP method in image captioning. "Similarity (Generation)" represents the average similarity score between a generated caption and nine others from the diffusion model, and "Similarity (Reference)" denotes the average similarity score between a generated caption and five reference captions. The best-selected caption from 10 candidates is
bolded. \label{tab-image-caption}}
\centering
\setlength{\tabcolsep}{1pt} 
\renewcommand{\arraystretch}{1.2}
\begin{tabular}{p{0.25\textwidth}p{0.35\textwidth}>{\centering\arraybackslash}p{0.19\textwidth}>{\centering\arraybackslash}p{0.18\textwidth}}
\hline
\textbf{Image}                                                                                     & \textbf{Diffusion-generated captions}                                                                                                                                   & \textbf{Similarity (Generation)} & \textbf{Similarity (Reference)} \\ \hline
\multicolumn{1}{c}{\multirow{4}{*}{\raisebox{-0.1\height}{\includegraphics[width=3cm,height=3cm]{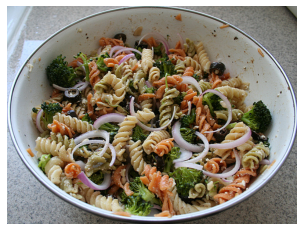}} }}                                                                      & { \scriptsize On a plate you will see a serving of pasta with vegetables on the top}                                                         & \textbf{0.6773}                   & \textbf{0.6675}                    \\
\multicolumn{1}{c}{}                                                                      & { \scriptsize a pasta dish with broccoli, bruzzelli, and serafo sauce on the table. You can see the texture and the slime}                  & 0.6317                             & 0.6094                             \\
\multicolumn{1}{c}{}                                                                      & {\scriptsize A dish of pizza on a plate containing a bowl of vegetables (Bisseta, mash, and cheese) topped.}                                & 0.6291                             & 0.5743                             \\
\multicolumn{1}{c}{}                                           & { \scriptsize A pasta containing a salad with some vegetables (chow and fruit) and some vegetables (snappe and mashi) is sticking}           & 0.6333                             & 0.5968                             \\ \cline{1-1}
\textbf{Reference captions}                                                                        & {\scriptsize An eating platter of grilled cheese is inside of a nasti and a full of sliced grass and a grilled potato.}                     & 0.5811   & 0.5230      \\ \cline{1-1}
{ \scriptsize A large bowl full of noodles and onions}                            & {\scriptsize Pasta with a bowl of noodle on a table. Smoothie pasta and sauce from the pasta bowl have been enjoyed for several months}  & 0.6267                             & 0.6452                             \\
{\scriptsize A bowl filled with pasta and vegetables sitting on a gray counter.} & {\scriptsize Vegetables serving salad dressed with broccoli, cheese, sardonine, pepper, and fruit.  trimmed down}                         & 0.6393                             & 0.6181                             \\
{\scriptsize A bowl of pasta salad with onions and olives.}                      & {\scriptsize An assemblage of a bowl of noodles. A few bowls of pepper, chili, sliced vegetables can be enjoyed for a few minutes.}         & 0.6344                             & 0.6617                             \\
{\scriptsize A corkscrew pasta salad with onions and broccoli.}                  & {\scriptsize A plate of pasta in a pan topped with potatoes and mushrooms. A plate of pasta topped with chilies and parboiled potatoes.} & 0.6463                             & 0.6186                             \\
{\scriptsize a close up of a bowl of food with broccoli noodle salad and onions} & {\scriptsize a sloppy pasta with vegetables in the top, and a salad of broccoli and nasue (cook-nose).}                                     & 0.6223                             & 0.5916                             \\ \hline
\end{tabular} 
\end{table}

\subsection{Applying large language model-GDP to Q\&A}
\label{QnA}

This subsection examines the GDP's performance in question-answering (Q\&A) tasks, where both questions and their corresponding answers are textual. The analysis evaluates the accuracy of responses generated by three methods using the LLaMA-3.1-8B-Instruct model \cite{llama3modelcard} within a Q\&A framework, leveraging the widely recognized WikiQA dataset \cite{yang-etal-2015-wikiqa}. The WikiQA corpus is a benchmark dataset for open-domain Q\&A tasks and consists of 3,047 questions sourced from user queries submitted to Bing, paired with 29,258 candidate answers extracted from Wikipedia sentences. Among these, 1,473 question-answer pairs are labeled as correct, indicating an alignment between the questions and their respective answers.

In this experiment, we randomly sample 100 question-answer pairs from the subset of pairs labeled as correct to serve as the test set. This sampling ensures that the evaluation focuses on instances with human-validated answers based on dataset annotations.

To evaluate the semantic similarity between two text answers, we compute the cosine similarity score between their sentence embeddings, following the same approach described in Section~\ref{subsec-img}.

We evaluate three methods using the LLaMA-3.1-8B-Instruct model \cite{llama3modelcard} to answer questions, with the temperature parameter controlling the randomness or diversity of the model’s outputs. The default temperature is set to 0.7, balancing diversity and determinism. The first method is the GDP approach, which employs a cosine dissimilarity loss (1 minus the cosine similarity score) applied to answers generated by the LLaMA model, as defined in \eqref{loss} with $m=50$. This generative approach selects the answer that minimizes the dissimilarity loss averaged over $m=50$ generated answers for the same question, prioritizing semantic similarity to the correct answer. The second method is the deterministic approach, which uses the LLaMA model with a temperature setting of 0. This configuration ensures that the model always returns the same answer for a given question, emphasizing consistency and accuracy in output. The third method is another generative approach using the default temperature setting of 0.7 without minimizing a loss function over $m=50$ candidates. Unlike the GDP approach, it generates a single answer ($m=1$) for each question, which can vary slightly in semantic meaning across iterations, introducing diversity in the outputs.

As shown in Figure \ref{fig_qa}, the mean similarity scores for the deterministic, generative, and GDP methods on the test set are 0.7670, 0.7590, and 0.7846, respectively, while the corresponding median values are 0.7811, 0.7781, and 0.8118. These results indicate that the GDP method enhances the performance of the generative model.

This systematic comparison underscores the impact of the synthetic sample size $m$ ($m=50$ vs $m = 1$) and the inherent trade-offs between accuracy and diversity. When generating a single answer, the deterministic approach often surpasses the generative approach (using the default temperature) by prioritizing accuracy. In contrast, GDP effectively manages this trade-off by optimizing the final output from a diverse yet comparable set of candidate answers. This approach balances accuracy and diversity, aligning with the discussion on the impacts of synthetic sample size for GDP in Theorem \ref{thm_gdi}.

\begin{figure}[!htbp]
  \centering
  \includegraphics[width=0.75\textwidth]{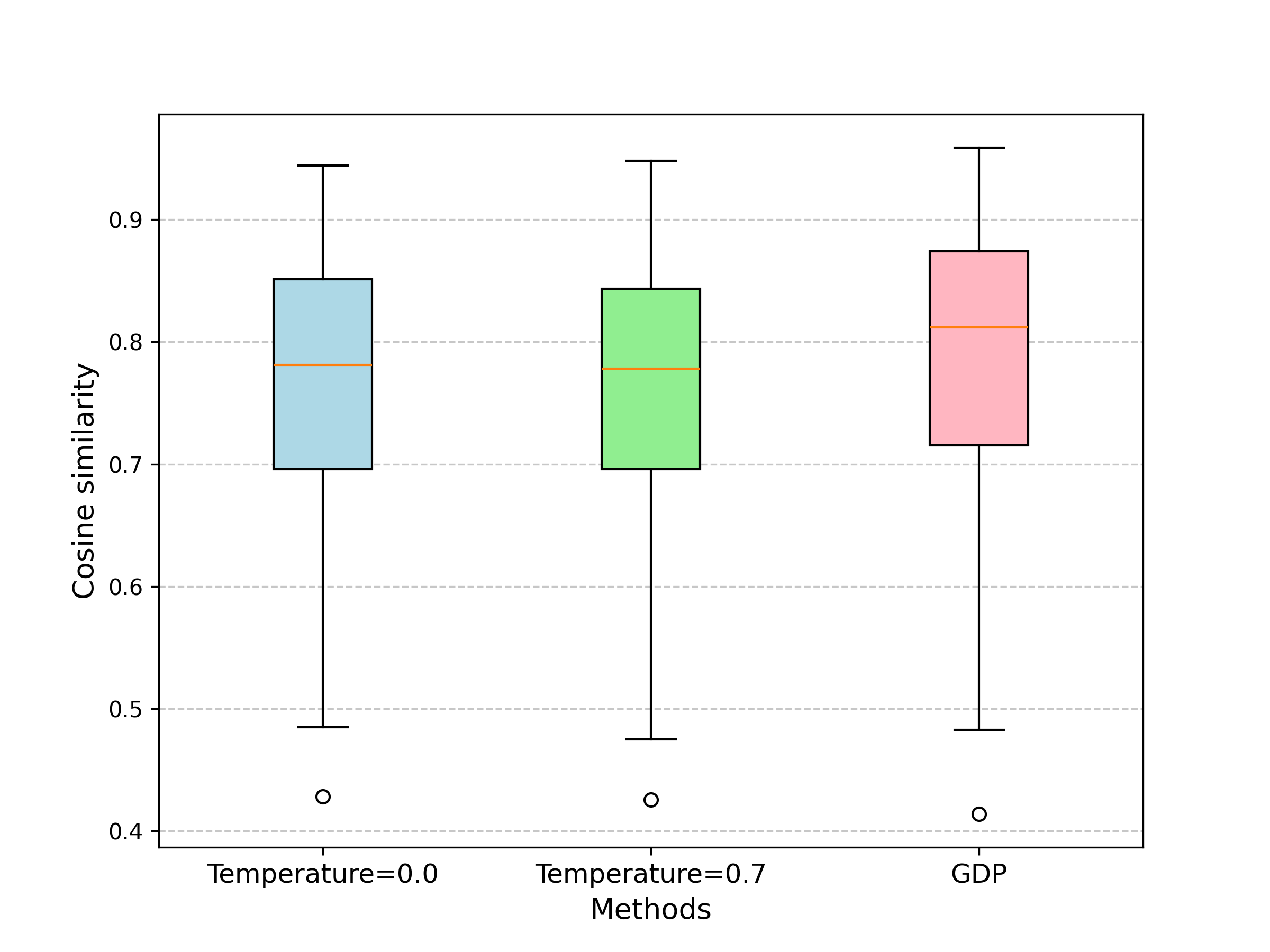}
\caption{Comparative boxplots of cosine similarity scores for three question-answering methods in Q\&A tasks over the test sample of size 100. "Temperature=0.0," "Temperature=0.7," and "GDP" represent the deterministic, generative, and GDP approaches, respectively, based on answers generated by the LLaMA-3.1-8B-Instruct model for a given question.}
\label{fig_qa} 
\end{figure}

\subsection{Diffusion-GDP for adaptive quantile regression}
\label{quantile}

 This experiment considers nonparametric quantile regression via simulations to evaluate the performance of diffusion-GDP for
a prediction task, where the true data-generating mechanism is known. Specifically, let $\X = (X_1, \ldots, X_p)^T$ be a vector of predictors, and define the response $Y$ as 
\begin{eqnarray} \label{model1} 
Y = \sin(\mathbf{X}^T \boldsymbol{\beta}) \;+\; \log\bigl(1 + |X_1|\bigr) \;+\; \epsilon\,(1 + \,|X_2|), 
\end{eqnarray} 
where $\X^T$ denotes the transpose of $\X$, and $\bm \beta$ is the vector of regression coefficients generated uniformly in $(-1,1)$. To induce heteroscedasticity, the noise term $\epsilon$ follows a normal distribution with mean $0$ and variance $|X_2|$, that is, $ \epsilon \sim N\bigl(0, |X_2|\bigr)$.  Hence, the variance is modulated by the magnitude of $|X_2|$. This setup naturally creates a challenging scenario for regression, incorporating both nonlinear and heteroscedastic behavior. Two marginal distributions for \( \X \) are considered: 
\begin{itemize}
    \item \textbf{Case I:} \( \X \sim N(\bm 0, \mathbf{I}) \), where \( \mathbf{I} \) is the identity matrix.
    \item \textbf{Case II:} $\X \sim N(\bm 0, \bm{\Sigma})$, where the covariance matrix 
$\bm \Sigma$ is defined by \( \Sigma_{ij} = \rho^{|i-j|} \) with \( \rho = -0.5 \). Here $\bm{\Sigma}=(\Sigma_{ij})_{p \times p}$ follows an AR(1) dependence structure and $1$ is the common 
variance of each $X_i$ and $\rho$ ($|\rho| <1$) is the lag-1 correlation between two observations $X_i$ and $X_{i+1}$. 
\end{itemize}

In this study, we focus on adapting to multiple quantiles by evaluating the predictive performance of several supervised models across the 5\%, 20\%, 50\%, 80\%, and 95\% quantiles. Although a single supervised model may excel at predicting a specific quantile, it often struggles to maintain robust performance over a range of quantiles. To assess this, we measure out-of-sample prediction errors using two widely adopted metrics: Root Mean Squared Error (RMSE) and Mean Absolute Deviation (MAD).

For diffusion-GDP, we employ a Gaussian diffusion model that offers statistical guarantees (see Theorem \ref{thm_gdi} and Corollary \ref{cor2}). We compare its performance against two state-of-the-art methods—extreme gradient boosting (XGBoost; \cite{chen2016xgboost}) and deep quantile regression (DQR; \cite{padilla2022quantile}). Each model is designed to predict multiple quantiles of $\Y$ given $\x$.

In our simulations, we generate $n = 10{,}000$ observations with $p = 100$ based on the model in~\eqref{model1} for Cases I and II. For quantile regression, GDP trains the diffusion model to perform conditional generation of $\Y$ given $\x$ using the conditional 
probability with $m = 1{,}000$ samples, following the procedure described in Section 2.1. Additionally, XGBoost and DQR optimize using the averaged pinball loss \cite{koenker1978regression, steinwart2011pinball} over the target quantiles of 5\%, 20\%, 50\%, 80\%, and 95\%.

The simulated data are split into training and test datasets with a ratio 7:3. For the testing data, we compute the quantiles as the truth for each $\x$ by sampling $y$ from the true model.
For evaluation, we use Root Mean Squared Error (RMSE) and Mean Absolute Deviation (MAD), computed at each quantile level, and also report the overall performance averaged across the five quantile levels.

As shown in Table~\ref{tab-simu}, GDP achieves lower RMSE and MAD than both XGBoost and DQR across the overall evaluation metrics, except in two situations of Cases I and II in RMSE. This outcome highlights the advantage of more closely approximating the conditional distribution, resulting in tighter alignment with risk-based metrics (e.g., RMSE and MAD), which is consistent with the quantile adaptation result in Corollary \ref{cor1}. 
XGBoost, leveraging its gradient-boosting framework based on decision trees, excels at handling structured data and remains robust to sparse or missing features. Meanwhile, DQR effectively models nonlinearities and heteroscedasticity. However, GDP gains a distinct edge by comprehensively modeling the full conditional distribution, translating into superior performance on metrics sensitive to distributional characteristics. Notably, GDP employs a distribution loss rather than the pinball loss commonly used for quantile regression, enhancing its ability to capture broader distributional nuances.

\begin{table}[!ht]
\caption{RMSE and MAD for quantile regression evaluated at five target quantiles, along with their average performance (denoted by “Average”). The best performance is highlighted in bold.\label{tab-simu}} 
\centering
\begin{tabular}{llllllll}
\hline
\multicolumn{1}{l}{Case I}  &       & 5\%   & 20\%  & 50\%  & 80\%  & 95\%  & Average \\
\hline
                            RMSE  & DQR   & 1.32  & \textbf{0.92}  & \textbf{0.78}  & 0.99  & 1.37  & 1.08    \\
                                  & XGBoost & 1.39  & 0.98  & 0.82 & 1.00 & 1.42  & 1.12 \\
                                  & GDP   & \textbf{1.28} & 0.93 & \textbf{0.78}  & \textbf{0.95} & \textbf{1.31} & \textbf{1.05} \\
\cline{3-8}
                            MAD   & DQR   & 0.93  & \textbf{0.69}  & 0.66 & 0.72 & 0.96  & 0.79    \\
                                  & XGBoost & 0.94  & 0.71 & 0.66  & 0.74  & 0.97  & 0.81    \\
                                  & GDP   & \textbf{0.92} & \textbf{0.69}  & \textbf{0.65}  & \textbf{0.70}  & \textbf{0.95} & \textbf{0.78} \\
\hline
\multicolumn{1}{l}{Case II}        &       & 5\%   & 20\%  & 50\%  & 80\%  & 95\%  & Average \\
\hline
                            RMSE  & DQR   & 1.41  & 0.93  & 0.80  & 1.13  & 1.60  & 1.17    \\
                                  & XGBoost & 1.42 & 0.99  & 0.83 & 1.08 & 1.53 & 1.17 \\
                                  & GDP   & \textbf{1.37} & \textbf{0.90}& \textbf{0.78} & \textbf{0.97}  & \textbf{1.36}  & \textbf{1.08} \\
\cline{3-8}
                            MAD   & DQR   & 0.97  & 0.68 & \textbf{0.66}  & 0.78  & 1.03  & 0.82    \\
                                  & XGBoost & \textbf{0.95}  & 0.74  & \textbf{0.66}  & 0.75 & 0.99  & 0.82    \\
                                & GDP   & 0.97 & \textbf{0.66} & 0.67 & \textbf{0.70}  & \textbf{0.95} & \textbf{0.79} \\
\hline
\end{tabular}
\end{table}

Next, we present a generalized version of Theorem \ref{thm_gdi} for quantile regression. 
\begin{corollary}[Quantile adaptation]
\label{cor1}
Let $\theta$ represent the $\alpha$ quantile of $Y_t$ given $\x_t$. If
we estimate $\theta$ using the pinball loss $$\ell_\alpha(y, \theta(\x_t)) = 
\begin{cases} 
\alpha (y - \theta(\x_t)) & \text{if } y \geq \theta(\x_t), \\
(1 - \alpha)(\theta(\x_t) - y) & \text{if } y < \theta(\x_t),
\end{cases}$$
assuming that the density $p_{y_t|\x_t}(y)$ is lower bounded by $c_l$, then, 
as $m\rightarrow\infty$,
\begin{eqnarray}
\frac{1}{2}c_l\E(\theta_0-\hat\theta)^2\leq \E R(\theta_0,\hat\theta)\leq c_1\E W(\hat{P}_{y_t|\x_t},P_{y_t|\x_t}).
\end{eqnarray}
\end{corollary}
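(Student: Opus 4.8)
The plan is to apply Theorem~\ref{thm_gdi} for the upper bound and to derive the lower bound by a direct second-order analysis of the pinball risk. The upper bound is essentially immediate: the pinball loss $\ell_\alpha(\cdot,\theta)$ is Lipschitz in its $y$-argument with constant $\max(\alpha,1-\alpha)\le 1$ and, since we take $m\to\infty$ so the synthetic sampling error vanishes, Theorem~\ref{thm_gdi} yields $\E R(\theta_0,\hat\theta)\le c_1\E W(\hat P_{y_t|\x_t},P_{y_t|\x_t})$ once we check Assumptions~\ref{A-1}--\ref{A-2}. Lipschitzness in $\theta$ holds with $\beta\le 1$ for the same reason, and the bounded-domain requirement is inherited from $\Theta(\x_t)$ being (contained in) the support of $\hat P_{y_t|\x_t}$. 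So the right-hand inequality reduces to verifying the variance condition, which I address below as part of the main computation.

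For the left inequality, the key step is to compute $R(\theta_0,\theta)=\E_{y_t|\x_t}[\ell_\alpha(\Y_t,\theta)-\ell_\alpha(\Y_t,\theta_0)]$ exactly, using that $\theta_0$ is the population $\alpha$-quantile (hence the unique minimizer of the pinball risk). Write $F$ for the conditional CDF of $Y_t$ given $\x_t$ and $p$ for its density. A standard identity gives the pinball risk as a function of the candidate $\theta$ whose derivative in $\theta$ is $F(\theta)-\alpha$; integrating from $\theta_0$ (where $F(\theta_0)=\alpha$) to $\theta$ yields $R(\theta_0,\theta)=\int_{\theta_0}^{\theta}\bigl(F(u)-\alpha\bigr)\,du=\int_{\theta_0}^{\theta}\bigl(F(u)-F(\theta_0)\bigr)\,du$. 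Since $F'=p\ge c_l$, a mean-value / monotonicity argument gives $F(u)-F(\theta_0)\ge c_l(u-\theta_0)$ for $u\ge\theta_0$ and the sign-reversed inequality for $u<\theta_0$, so in either case $R(\theta_0,\theta)\ge \tfrac12 c_l(\theta-\theta_0)^2$. Taking expectation over the randomness of $\hat\theta$ and combining with the earlier equality that $R\ge0$ gives $\tfrac12 c_l\,\E(\theta_0-\hat\theta)^2\le \E R(\theta_0,\hat\theta)$, which is exactly the left-hand side of the claim; the $\E R\le \E R$ middle "inequality" is trivial once both bounds are in place, i.e. the chain is lower bound $\le \E R \le$ upper bound.

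It remains to discharge Assumption~\ref{A-2}, the variance bound $\Var_{y_t|\x_t}(\ell_\alpha(\Y_t,\theta)-\ell_\alpha(\Y_t,\theta_0))\le c_v R(\theta_0,\theta)$. I would bound the variance by the second moment $\E_{y_t|\x_t}[(\ell_\alpha(\Y_t,\theta)-\ell_\alpha(\Y_t,\theta_0))^2]$, and use that $|\ell_\alpha(y,\theta)-\ell_\alpha(y,\theta_0)|\le|\theta-\theta_0|$ pointwise, while the increment is nonzero only for $y$ in an interval of length $|\theta-\theta_0|$ around $\theta_0$, of probability at most (something like) $\sup p\cdot|\theta-\theta_0|$; this gives second moment $\lesssim |\theta-\theta_0|^3$, which for bounded $\Theta(\x_t)$ is $\lesssim |\theta-\theta_0|^2\lesssim R(\theta_0,\theta)/c_l$ by the just-proved quadratic lower bound. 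If one prefers to avoid an upper bound on $p$, one can instead bound the second moment directly by $|\theta-\theta_0|$ times an integral that is itself $\asymp R$; either way the constant $c_v$ absorbs $c_l$ and the domain bound $c_b$.

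The main obstacle is not any single estimate but making the regularity bookkeeping airtight: confirming that the pinball loss, which is only piecewise linear and hence not differentiable at $\theta=y$, genuinely satisfies Assumptions~\ref{A-1}--\ref{A-2} in the exact form stated (the Lipschitz constant $\beta$, the constant $c_b$, and especially the variance constant $c_v$ in terms of $c_l$ and the domain), and that the exact risk identity $R(\theta_0,\theta)=\int_{\theta_0}^{\theta}(F(u)-\alpha)\,du$ holds without extra smoothness assumptions on $F$ beyond the density lower bound. I expect these to go through by the elementary integration-by-parts representation of the pinball risk, but this is the place where hidden assumptions (e.g. finiteness of $\E|Y_t|$, which is implied by the bounded support of $\hat P$ only on the synthetic side, not the true side) could need to be surfaced and stated.
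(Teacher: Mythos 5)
Your proposal follows essentially the same route as the paper's proof: verify Assumptions \ref{A-1}--\ref{A-2} for the pinball loss (Lipschitz constant $\beta=\max(\alpha,1-\alpha)$, variance bounded by the second moment and then by $(\theta-\theta_0)^2\lesssim R/c_l$), invoke Theorem \ref{thm_gdi} with $m\to\infty$ to kill the synthetic sampling term for the upper bound, and obtain the lower bound from a quadratic expansion of the pinball risk around $\theta_0$. The only structural difference is that the paper writes the expansion in second-order Taylor/mean-value form, $R(\theta_0,\hat\theta)=\tfrac12 p_{y_t|\x_t}(\bar\theta)(\hat\theta-\theta_0)^2$, whereas you use the exact identity $R(\theta_0,\theta)=\int_{\theta_0}^{\theta}\bigl(F(u)-\alpha\bigr)\,du$; these are interchangeable here, and your integral form requires marginally less regularity of the density. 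One remark in your variance verification is wrong, though harmless: the increment $\ell_\alpha(y,\theta)-\ell_\alpha(y,\theta_0)$ is \emph{not} supported on an interval of length $|\theta-\theta_0|$ around $\theta_0$ --- for $y\ge\max(\theta,\theta_0)$ it equals the nonzero constant $\alpha(\theta_0-\theta)$ and for $y<\min(\theta,\theta_0)$ it equals $(1-\alpha)(\theta-\theta_0)$ --- so the claimed $|\theta-\theta_0|^3$ bound on the second moment is false; that moment is genuinely of order $(\theta-\theta_0)^2$. Your fallback, the pointwise bound $|\ell_\alpha(y,\theta)-\ell_\alpha(y,\theta_0)|\le\max(\alpha,1-\alpha)|\theta-\theta_0|$, already yields $\Var\le\max(\alpha,1-\alpha)^2(\theta-\theta_0)^2\le\tfrac{2}{c_l}\max(\alpha,1-\alpha)^2 R(\theta_0,\theta)$, which is exactly the paper's verification of Assumption \ref{A-2}, so the corollary follows as you describe.
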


\section{Discussion}
\label{discussion}

The proposed Generative Distribution Prediction (GDP) framework offers a unified approach to multimodal learning by integrating diverse data modalities—tabular, textual, and image—through generative models and shared embeddings. By leveraging transfer learning and conditional generation techniques, GDP directly models the conditional distributions of the response given predictors, enhancing both predictive and generative accuracy. This approach ensures that the generated outputs closely approximate the true data distributions, which improves uncertainty quantification and addresses challenges in domain adaptation and heterogeneous data integration.

Experimental results across various tasks—including Yelp review predictions, image captioning, question answering, and adaptive quantile regression—demonstrate that GDP outperforms traditional models and state-of-the-art methods while exhibiting remarkable robustness and adaptability. Its high generative accuracy is evident in its ability to capture fine-grained data structures and produce outputs that closely mirror real-world distributions, ultimately reducing prediction errors.

Theoretical foundations of the GDP method, established for diffusion models, provide statistical guarantees and scalability for various applications. Notably, recent work \cite{tian2024enhancing} suggests that these theoretical results may be extendable to normalizing flows. However,  establishing statistical guarantees for other generative models such as BLIP remains an open question. This limitation underscores the need for further theoretical exploration to understand GDP's performance benefits across different generative approaches.

Future research could focus on improving computational efficiency and expanding the applicability of GDP, further solidifying its role as a valuable tool for advancing multimodal analytics and deepening our understanding of generative modeling in complex data environments.

GDP’s theoretical underpinnings on diffusion models validate its effectiveness, offering statistical guarantees and scalability for diverse applications. Its versatility underscores significant real-world potential. Future efforts could improve efficiency and applicability, positioning GDP as a valuable tool for advancing multimodal analytics across fields, particularly for statistical theory when GDP 
working with other generative models.

\section*{Appendix}
\label{appendix}
\appendix

\section{Proofs}
\label{sec: proofs}
\begin{proof}[Proof of Theorem \ref{thm_gdi}]
    Let $\tilde{Y}_t$ denote the synthetic sample from $\hat{P}_{\y_t|\x_t}$ and define the corresponding excessive risk based on $\tilde{Y}_t$ as $\tilde{R}(\bm \theta_0,\hat{\bm \theta})$. By Assumption \ref{A-1} and the duality theorem of Kantorovich and Rubinstein \cite{kantorovich1958duality}, the following bounds hold:
\begin{align*}
&\big|R(\bm \theta_0,\bm \theta)-\tilde{R}(\bm \theta_0,\bm \theta)\big| \leq \beta W(\hat{P}_{\y_t|\x_t},P_{\y_t|\x_t}), \\
& \big|\Var_{\y_t|\x_t}(\ell(\bm \theta, \tilde{Y}_t) - \ell(\bm \theta_0, \tilde{Y}_t)) - \Var_{\y_t|\x_t}(\ell(\bm \theta, Y_t) - \ell(\bm \theta_0, Y_t)) \big| \\
&\leq 4c_l\beta W(\hat{P}_{\y_t|\x_t},P_{\y_t|\x_t}).
 \end{align*}

By Assumption \ref{A-2}, 
    \begin{align*}
        \Var_{\tilde{\y}_t|\x_t}(\ell(\bm \theta, \tilde{\Y}_t) - \ell(\bm \theta_0, \tilde{\Y}_t))
 &\leq \Var_{\y_t|\x_t}(\ell(\bm \theta, Y_t) - \ell(\bm \theta_0, Y_t)) + 4c_{\ell}\beta W, \\
        &\leq c_v R(\bm \theta_0,\bm \theta) + 4c_{\ell}\beta W, \\
        &\leq c_v \tilde{R}(\bm \theta_0,\bm \theta) + 2c_{\ell}(c_v + 2\beta) W.
    \end{align*}

    Setting $c_e = 1 + \beta$ yields that
$\{R(\bm \theta_0,\bm \theta) \geq \varepsilon_m + c_e W\} \subset \{\tilde{R}(\bm \theta_0,\bm \theta) \geq \varepsilon_m + W\}$. Hence,
    \begin{align}
       P_{D_{\x_t}}&\big(R(\bm \theta_0,\bm \theta) \geq \varepsilon_m + c_e W\big) 
\leq P_{D_{\x_t}}\big(\tilde{R}(\bm \theta_0,\bm \theta) \geq \varepsilon_m + W\big) \nonumber \\
        &\leq \sum_{j=0}^{\infty} P_{D_{\x_t}}\big(\sup_{A_j} E_m(\tilde{L}_m(\bm \theta) - \tilde{L}_m(\bm \theta_0)) > 2^j (\varepsilon_m + W)\big)  :=\sum_{j=0}^{\infty} I_j,
\label{inq}
    \end{align}
    where $\tilde{L}_m(\bm \theta) = m^{-1} \sum_{k=1}^m \ell(\bm\theta,\tilde{Y}^k_t; \x_t)$ and $A_j = \{\bm\theta: 2^j(\varepsilon_m+W) < \tilde{R}(\bm \theta_0,\bm \theta) \leq 2^{j+1}(\varepsilon_m+W)\}$. 
    
Next, we bound each term $I_j$ ($j=0,\cdots,$) in \eqref{inq} by Lemma \ref{large-d} separately. Toward
this end, we verify \eqref{mean-var} and \eqref{entropy0}. Note that  
\begin{align*}
&\phi(2^j(W+\varepsilon_m),m)\geq \\
 &\frac{m (2^j(\varepsilon_m+W))^{2}}{2[4(c_v2^{j+1}+2c_{\ell}c_v+4c_{\ell}\lambda)(\varepsilon_m+W)+2^j(\varepsilon_m+W) \frac{c_{\ell}}{3} ]}\\
&\geq \frac{m 2^j(\varepsilon_m+W)}{c_k},
\end{align*}
 where $c_k=2[8(c_{v}+c_{v}c_{\ell}+2c_{\ell}\lambda)+\frac{c_{\ell}}{3}]$. This implies
\eqref{mean-var}. Moreover, $\varepsilon_m$ is required to satisfy the entropy inequality,
$$
\int_{r\varepsilon_m^2/8}^{4 c^{1/2}_{v} \varepsilon_m}
H^{1/2}(u, \mathcal{L}) \, du \leq \frac{r^{3/2}}{2^{10}} m^{1/2} \varepsilon_m^2.
$$
where $\mathcal{L}=\{l(\bm \theta,\y),\bm \theta\in [-c_b,c_b]^{d_{\bm \theta}}\}$. Then, $H(u, \mathcal{L})\leq \frac{1}{d_{\bm \theta}}\log(\frac{2B\beta}{u})$. This leads to $\varepsilon_m>c_{\varepsilon}m^{-\frac{1}{2}}\log m$, for some $c_{\varepsilon}$ depending on $c_b$, $c_v$, $r$ and $d_{\bm \theta}$. Specifically, $c_{\varepsilon}$ can be set as $2^{14}\frac{c_v^{1/2}}{d_{\bm \theta}}$ when $m$ is large enough. Then, \eqref{entropy0} holds. Hence,
by Lemma \ref{large-d}, we obtain that
    \begin{align*}
        P_{D_{\x_t}}\big(R(\bm \theta_0,\bm \theta) \geq \varepsilon_m + c_e W\big) \leq 3\exp\left(- (1-r) \frac{m \varepsilon_m}{c_k} \right).
    \end{align*}
    
    To derive the risk bound, let $S = R(\bm \theta_0, \hat{\bm \theta}) - c_e W$. Then,
    \begin{align*}
        \mathbb{E}_{D_{\x_t}} \frac{S}{\varepsilon_m} &\leq 1 + \int_1^{\infty} P(S \geq x\varepsilon_m)\,dx + \frac{1}{\varepsilon_m} P(S > \varepsilon_m) \leq 2.
    \end{align*}
    
    Hence, 
    \begin{align*}
        \mathbb{E}_{D_{\x_t}} R(\bm \theta_0, \hat{\bm \theta}) \leq c_e W + 2c_{\varepsilon} m^{-1/2} \log m.
    \end{align*}
    
By the assumption $\mathbb{E} W(\hat{P}_{\y_t|\x_t}, P_{\y_t|\x_t}) \leq \gamma_n$,  taking expectations on both sides completes the proof.
\end{proof}

\begin{proof} [Proof of Corollary \ref{cor1}:]
    The result is derived directly from the Taylor expansion of the risk,
    \begin{align}
    \label{eq-taylor}
        R(\theta_0,\hat\theta)=\E_{y_t|\x_t} (\ell_{\alpha}(\hat\theta,y_t)-\ell_{\alpha}(\theta_0,y_t))= \frac{1}{2}p_{y_t|\x_t}(\bar\theta)(\hat\theta-\theta_0)^2,
    \end{align}
    where $\bar\theta$ is a point between $\hat\theta$ and $\theta_0$.
    This second-order expansion holds with the fact that the first derivative of the expectation loss with respect to $\theta$ at $\theta_0$ is zero. 

Then we validate the conditions of Theorem \ref{thm_gdi}.  
Note that 
$|\ell_{\alpha}(\theta,y_t)-\ell_{\alpha}(\theta_0,y_t)|\leq \max(\alpha,1-\alpha)|\theta-\theta_0|$.
The pinball loss $\ell_{\alpha}$ satisfies Assumption \ref{A-1} with $\beta=\max(\alpha,1-\alpha)$.
\begin{align*}
    &\Var_{y_t|\x_t}(\ell(\theta(\x_t), Y_t) - \ell(\theta_0(\x_t), Y_t))\\
&\leq \E_{y_t|\x_t}(\ell(\theta(\x_t), Y_t) - \ell(\theta_0(\x_t), Y_t))^2\\
    &\leq \max(\alpha,1-\alpha)^2|\theta-\theta_0|^2\\
    &\leq \frac{2}{c_l} \max(\alpha,1-\alpha)^2R(\theta_0,\hat\theta).
\end{align*}
The last inequality holds with the expansion \eqref{eq-taylor} and lower bound of $p_{y_t|\x_t}$. Then Assumption \ref{A-2} is valid with $c_v=\frac{2}{c_l} \max(\alpha,1-\alpha)^2$.
Hence, by Theorem \ref{thm_gdi}, $\frac{1}{2}c_l\E_{\x_t} (\hat\theta-\theta_0)^2\leq \E R(\theta_0,\hat\theta)\leq c_1\E W(\hat{P}_{\y_t|\x_t},P_{\y_t|\x_t})$. This completes
the proof. 
\end{proof}

\begin{lemma}[Theorem 3 in \cite{shen1994convergence}, Lemma 11 in \cite{tian2024enhancing}]
\label{large-d}
Assume that $f(\Z_j) \in \cal F$ satisfies the Bernstein condition with some constant $U$ for an independent sample $\Z_1,\cdots,\Z_m$, $\mathbb{E}|f(\Z_j)-\mathbb{E}f(\Z_j)|^k\leq \frac{1}{2}k!v_j U^{k-2}$; $j=1,\ldots,m$; $k\geq 2$.  Let $\varphi (\varepsilon_m,v,{\cal F})=\frac{m\varepsilon_m^{2}}{2[4v^2+ \varepsilon_m U /3 ]}$, 
where $\frac{1}{m}\sum_{j=1}^m {\rm Var}(f(\Z_j))\leq m^{-1} \sum_{j=1}^m v_j\leq v$. Assume that 
\begin{eqnarray}
\label{mean-var}
\varepsilon_m \leq r  v/4U, 
\end{eqnarray}
with $0<r<1$ and
\begin{eqnarray}
\label{entropy0}
\int_{r \varepsilon_m/8}^{\sqrt{v}} H_B^{1/2}(u,{\cal F}) du \leq \sqrt{n}\varepsilon_m r^{3/2}/2^{10},
\end{eqnarray}
then 
\begin{align*}
  &P^{*}(\sup_{\{f \in {\cal F}\}} n^{-1} \sum_{i=1}^n (f(\Y^i) -\E f(\Y^i)) \geq \varepsilon_m) \\
&\leq  3
\exp(-(1-r) \varphi(\varepsilon_m,v,m)),    
\end{align*}
where $P^{*}$ denotes the outer probability measure for $\Z_1,\cdots,\Z_m$.
\end{lemma}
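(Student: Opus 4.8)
The plan is to treat this as a uniform large-deviation inequality for an empirical process indexed by $\mathcal{F}$ and to prove it by the classical bracketing-plus-chaining method; in fact the statement is transcribed verbatim from Theorem 3 of \cite{shen1994convergence} (and Lemma 11 of \cite{tian2024enhancing}), so the cleanest route is to verify that our hypotheses match theirs and invoke that theorem. For completeness I would reconstruct the argument in three stages.

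First, I would handle a \emph{single} fixed $f \in \mathcal{F}$. The moment bound $\E|f(\Z_j)-\E f(\Z_j)|^k \le \tfrac{1}{2}k!\,v_j U^{k-2}$ is precisely the hypothesis of Bernstein's inequality for independent, non-identically-distributed summands. Applying it to $m^{-1}\sum_{j=1}^m (f(\Z_j)-\E f(\Z_j))$ produces a tail of the form $\exp(-\varphi(\varepsilon_m,v,\mathcal{F}))$, in which the denominator $4v^2+\varepsilon_m U/3$ interpolates between a variance-dominated (sub-Gaussian) regime and a range-dominated (sub-exponential) regime. Condition \eqref{mean-var}, namely $\varepsilon_m \le rv/4U$, forces the variance term to dominate throughout the relevant range, so the exponent behaves like the sub-Gaussian rate $m\varepsilon_m^2/v^2$.

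Second, to pass from a single $f$ to the supremum over the whole class, I would run a chaining argument over a nested sequence of bracketing covers at geometrically decreasing scales. At scale $u$ one takes a minimal $u$-bracket cover of log-cardinality $H_B(u,\mathcal{F})$, approximates each $f$ by a telescoping chain of bracket endpoints, and applies the single-function Bernstein bound to the finitely many increments at each level through a union bound. Summing the resulting exponential tails across levels, the accumulated cost is governed by the entropy integral $\int H_B^{1/2}(u,\mathcal{F})\,du$; I would calibrate the innermost scale to $r\varepsilon_m/8$ (so the residual approximation error is absorbed into $\varepsilon_m$) and the outermost to $\sqrt{v}$ (the diameter in the variance-induced pseudometric). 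Condition \eqref{entropy0} is exactly the requirement that this integral is dominated by $\sqrt{m}\,\varepsilon_m r^{3/2}/2^{10}$, which keeps the chaining error from eroding the exponent and delivers the final rate $(1-r)\varphi(\varepsilon_m,v,\mathcal{F})$ together with the constant factor $3$.

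The main obstacle is the bookkeeping in the chaining step: one must balance, level by level, the growth in the number of brackets against the exponential decay supplied by Bernstein's inequality, and track how the explicit constants—the factor $2^{10}$, the power $r^{3/2}$, the leading $3$, and the factor $(1-r)$—emerge from summing the geometric series of per-level failure probabilities. Because the hypotheses are identical to those of \cite{shen1994convergence}, no adaptation is needed for our variance-weighted, non-identically-distributed setting, and I would ultimately cite that theorem rather than reproduce the constant-tracking in full.
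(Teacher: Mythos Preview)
Your proposal is correct and matches the paper's treatment: the paper does not prove this lemma but simply cites it as Theorem~3 of \cite{shen1994convergence} (equivalently Lemma~11 of \cite{tian2024enhancing}), and you correctly identify both the citation and the underlying Bernstein-plus-bracketing-chaining argument that yields the stated constants. Your sketch of the three stages is an accurate reconstruction of the Shen--Wong argument, and your final decision to cite rather than reproduce the constant-tracking is exactly what the paper does.
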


\section{Experiment details}
\label{sec:experiment}

\subsection{Experiment settings for Section \ref{domain-adaptation}}
For the machine learning methods used in this experiment, including
logistic regression, Naive Bayes, Random Forest, and SVM, we utilize the standard implementation provided by \cite{scikit-learn} to expand to text predictors. The text data is preprocessed using scikit-learn's \texttt{TfidfVectorizer}, which converts the text into numerical features by analyzing words and bigrams, removing stop words, and filtering out rare terms with a minimum document frequency of 5. The machine learning models follow the configurations outlined in \cite{liu2020yelp}.

For the BERT-based classification model, we employ the \texttt{simpletransformers} library \cite{simpletransformers}. The model is initialized with the pre-trained \texttt{bert-base-cased} architecture and fine-tuned for a 5-label classification task. Training is conducted on a GPU with a maximum sequence length of 64 tokens and a batch size of 16. The model is trained for 10 epochs.

For conditional diffusion generation, we use the discrete diffusion model \cite{austin2021structured} for 
categorical response given text review and the pre-trained \texttt{bert-base-uncased} model as the backbone for generating text embeddings from the input as conditional embeddings, with an embedding dimension of 768. This discrete model replaces 
categorical values with alternative values over time according to a transition matrix, where the diffusion network consists of a two-layer ReLU network. The input layer incorporates the timestep, noisy label, and condition, totaling 774 dimensions, while the hidden layer contains 256 neurons. The optimization process employs the AdamW optimizer for both the diffusion model and the BERT-based embedding model, with a learning rate of \(1 \times 10^{-4}\) for each. The cross-entropy loss (\texttt{nn.CrossEntropyLoss}) is used as the loss function. The batch size is set to 512, and the model is trained for 30 epochs. The timesteps in the diffusion model are set as 1000 from $1\times10^{-4}$ to $0.02$.

\subsection{Experiment settings for Section \ref{subsec-img}}
\label{Exp-B2}

In the image-captioning experiment, we employ a conditional diffusion model to encode multimodal data—including images and text data—into a shared latent embedding space for caption generation, as described in Section \ref{sec: diff}.

Our approach deviates from standard implementations in several key ways: (1) it uniquely integrates CLIP and SentenceBERT for multimodal alignment, (2) it employs a residual network for reverse diffusion instead of the more commonly used U-Net
\cite{ronneberger2015unet}, and (3) it utilizes vec2text to decode denoised embeddings into captions. While building on existing principles, this method makes it a distinctive contribution to multimodal text generation.

To represent image-caption pairs in this unified space, we use SentenceBERT and CLIP: SentenceBERT (GTR-T5-Base, \url{huggingface.co/sentence-transformers/gtr-t5-base}) captures rich linguistic semantics from textual descriptions, while CLIP (CLIP-ViT-B-32, \url{huggingface.co/sentence-transformers/clip-ViT-B-32}) extracts aligned image and text embeddings. All embeddings are standardized to a common dimensionality using trainable projection layers for alignment and then fused through concatenation. By combining SentenceBERT and CLIP, this approach enhances caption quality by refining textual representations beyond what CLIP alone can achieve.

Building on these joint embeddings, we apply a Gaussian diffusion process for conditional caption generation, iteratively injecting and refining noise in the text embeddings based on the given image. We model the reverse diffusion process by a residual network, progressively removing noise while conditioning the image, ensuring the generated caption remains visually relevant. Specifically, the diffusion model inputs a noisy text embedding from GTR-T5-Base, a condition embedding from CLIP-ViT-B-32, and a timestep indicator. These inputs are projected into a unified hidden space of 2048 dimensions, with the timestep further encoded into a time embedding through a two-layer MLP, also with a 2048-dimensional hidden layer.

At the core of the reverse diffusion process is a stack of ten residual blocks, each performing a linear transformation, followed by layer normalization and a ReLU activation to ensure stable training and efficient information flow. After iterative refinement through these blocks, a final projection layer maps the processed hidden representation back to the text embedding space. This mapping allows the model to predict the denoised text embedding at each diffusion step, which is then decoded by vec2text (\url{huggingface.co/ielabgroup/vec2text_gtr-base-st_corrector}) to generate the final image caption.

In the training process, the model is trained for 1000 epochs with 
Adam optimizer with a fixed learning rate of $1 \times 10^{-4}$ and batch size 256. The BLIP image-captioning model (\url{huggingface.co/Salesforce/blip-image-captioning-base}) is implemented with a temperature setting of 1.0.

\subsection{Experiment settings for Section \ref{QnA}}

The details have been provided in Section \ref{QnA} and are thus omitted.

\subsection{Experiment settings for Section \ref{quantile}}

For quantile regression, we employ \texttt{XGBoost} with the quantile loss objective. Hyperparameter tuning is conducted using \texttt{GridSearchCV} with 3-fold cross-validation. The search space includes the following parameters: learning rate (0.01, 0.04, 0.1), maximum tree depth (3, 5, 7), and number of estimators (50, 100, 150). The best hyperparameters are selected based on the pinball loss metric. Once tuning is complete, the optimal model is used to generate predictions on the test set.

For the DQR model, we utilize a ReLU network, with 3-fold cross-validation to determine the optimal network width (64, 128) and network depth (3, 5, 10). During training, the Adam optimizer is used with a fixed learning rate of $1 \times 10^{-4}$ for a maximum of 200 epochs. The model is trained with a batch size of 512, and early stopping is applied using the validation set, with a maximum patience of 20 epochs.

For the GDP method, the neural network architecture for the conditional diffusion models consists of two components: a ReLU network for condition embedding and a ReLU network for score matching, both set to the same network width. Additionally, we employ a sinusoidal time embedding to project the timestep into a vector. The diffusion model operates over 1000 timesteps, ranging from $1 \times 10^{-4}$ to $0.02$. Hyperparameter tuning is performed using 3-fold cross-validation, optimizing the condition embedding dimension (64, 128), time embedding dimension (16, 32, 64), network width (64, 128, 256), and network depth (3, 5). The training strategy follows the same procedure as the DQR model.

\section{Diffusion models and generation accuracy theory}
\label{sec: diffusion}

\subsection{Diffusion modeling}

Recent advancements in generative modeling provide compelling solutions to challenges in supervised learning by harnessing their robust data synthesis capabilities. In particular, models such as diffusion models and normalizing flows—each tailored to specific domains—enable the creation of high-quality synthetic data that closely approximates the true underlying data-generating distribution.

In this section, we rigorously derive the concept of diffusion generalization accuracy. In conjunction with Theorem \ref{thm_gdi} presented in Section \ref{sec: theory}, our derivation establishes a statistical guarantee for the performance of diffusion-GDP in the context of diffusion modeling. Furthermore, the multimodal diffusion model discussed in Section \ref{sec: diff} builds upon this framework by integrating diverse data modalities. Then, we employ an encoder-decoder architecture to generate vector embeddings capturing the intrinsic properties of each modality, followed by diffusion modeling on vector embeddings.

We begin our discussion with a Gaussian diffusion model.

\noindent \textbf{Forward process. } The forward process systematically transforms a random vector $\U(0)$ into white noise by progressively
injecting white noise into a differential equation defined with the Ornstein-Uhlenbeck process, leading to diffused distributions from the initial state $\U(0)$:
\begin{equation}
\label{forward}
\mathrm{d}\U(\tau)=-{b}_{\tau} \U(\tau)\mathrm{d}\tau+\sqrt{2{b}_\tau}\mathrm{d}W(\tau),\quad \tau \geq 0,
\end{equation}
where $\U(\tau)$ has a probability density $p_{\bm u(\tau)}$, $\{W(\tau)\}_{\tau\geq 0}$ represents a standard Wiener process and ${b}_t$ is a non-decreasing weight function. Under \eqref{forward}, $\U(\tau)$ given $\U(0)$ follows $N(\mu_{\tau}\U(0),\sigma^2_{\tau}\bm I)$, where $\mu_{\tau}=\exp(-\int_0^\tau {b}_s\mathrm{d} s)$ and $\sigma^2_{\tau}=1-\mu_{\tau}^2$. Here, we set
${b}_s=1$, which results in $\mu_{\tau}=\exp{(-\tau)}$ and $\sigma^2_{\tau}=1-\exp{(-2\tau)}$. Practically, the process terminates at a sufficiently large $\overline{\tau}$, ensuring the distribution of $\U(\tau)$,
a mixture of $\U(0)$ and white noise, resembles the standard Gaussian vector.

\noindent \textbf{Backward process.} Given $\U(\overline{\tau})$ in \eqref{forward}, a backward process is employed for sample generation for $\U(0)$. Assuming \eqref{forward} satisfies certain conditions \cite{anderson1982reverse}, the backward process $\bm V(\tau)=\U(\overline{\tau}-\tau)$, starting with $\U(\overline{\tau})$, is derived as: 
\begin{align}
\label{reverse}
\mathrm{d}\bm V(\tau)={b}_{\overline{\tau}-\tau}(\bm V(\tau)+2\nabla\log p_
{\bm u(\overline{\tau}-\tau)}(\U(\overline{\tau}-\tau))\mathrm{d}\tau +\sqrt{2{b}_{\overline{\tau}-\tau}}\mathrm{d}W(\tau),
\end{align}
where $\nabla\log p_{\x}$ is the score function which represents the gradient of $\log p_{\x}$.

\noindent \textbf{Score matching.} To estimate the unknown score function, we minimize a matching loss between the score and its approximator $\theta$:
$\int_{0}^{\overline{\tau}}\mathrm{E}_{\bm u(\tau)}\|\nabla \log p_{\bm u(\tau)}(\U(\tau))-\theta(\U(\tau),\tau)\|^2\mathrm{d}\tau$,
where $\|\bm u\|=\sqrt{\sum^{d_u}_{j=1}\bm u_j^2}$ is the Euclidean norm, which is equivalent to minimizing the following loss \cite{oko2023diffusion},
\begin{equation}
\label{loss_2}
\small{
\int_{\underline{\tau}}^{\overline{\tau}}\mathrm{E}_{\bm u(0)}\mathrm{E}_{\bm u(\tau)|\bm u(0)}\|\nabla \log p_{\bm u(\tau)|\bm u(0)}(\U(\tau)|\U(0))-\theta(\U(\tau),\tau)\|^2\mathrm{d}\tau,}
\end{equation}
with $\underline{\tau}=0$. In practice, to avoid score explosion due to $\nabla \log p_{\x(\tau)|\x(0)} \rightarrow \infty$ as $\tau\rightarrow 0$ and to ensure training stability, we restrict the integral interval to $\underline{\tau}>0$ \cite{oko2023diffusion,chen2023improved} in the loss function.
Then, both the integral and $\mathrm{E}_{\bm u(\tau)|\bm u(0)}$ can be precisely approximated by sampling $\tau$ from a uniform distribution on $[\underline{\tau},\overline{\tau}]$ and a sample of $\U(0)$ from the conditional distribution of $\U(\tau)$ given $\U(0)$.

\noindent \textbf{Generation.} To generate a random sample of $\bm V(\tau)$, we replace the score $\nabla\log p_{\bm u(\overline{\tau}-\tau)}$ by its estimate $\hat \theta$ in \eqref{reverse} to yield $\bm V(\tau)$ in the backward equation. For implementation, we may utilize a discrete-time approximation of the sampling process, facilitated by numerical methods for solving stochastic differential equations, such as Euler-Maruyama and stochastic Runge-Kutta methods \cite{song2020denoising}.

\noindent \textbf{Neural network. } An $\mathbb{L}$-layer network $\Phi$ is defined by a composite function
$
\Phi(\x)=(\bm{\mathrm{A}}_\mathbb{L}\sigma(\cdot)+\bm{b}_\mathbb{L})\circ\cdots(\bm{\mathrm{A}}_2\sigma(\cdot)+\bm{b}_2)\circ (\bm{\mathrm{A}}_1\x+\bm{b}_1),
$
where $\bm{\mathrm{A}}_i\in \R^{d_{i+1}\times d_i}$ is a weight matrix and $\bm b_i \in \R^{d_{i+1}}$ is the bias of a linear transformation of the $i$-th layer, and $\sigma$ is the ReLU activation function, defined as $\sigma(\x)=\max(\x,0)$.
Then, the parameter space $\Theta$ is set as $\mathrm{NN}(\mathbb{L},\mathbb{W},\mathbb{S},\mathbb{B},\mathbb{E})$ 
 with $\mathbb{L}$ layers, a maximum width of $\mathbb{W}$, effective parameter number $\mathbb{S}$, the sup-norm $\mathbb{B}$, and parameter bound $\mathbb{E}$:
\begin{align}
\label{p-space}
\mathrm{NN}(d_{in},d_{out},\mathbb{L},\mathbb{W},\mathbb{S},\mathbb{B},\mathbb{E})
=&\{ \Phi: d_1=d_{in}, d_{\mathbb{L}+1}=d_{out}, \max_{1\leq i\leq \mathbb{L}}d_i\leq\mathbb{W},  \nonumber\\
&\sum_{i=1}^{\mathbb{L}}(\|\bm{\mathrm{A}}_i\|_0+ \|\bm{b}_i\|_0)\leq \mathbb{S},  
\|\Phi\|_{\infty}\leq\mathbb{B},\nonumber\\ 
&\max_{1\leq i\leq \mathbb{L}}(\|\bm{\mathrm{A}}_i\|_{\infty}, \|\bm{b}_i\|_{\infty})\leq \mathbb{E}
\},
\end{align}
where $\|\cdot\|_{\infty}$ is the maximal magnitude of entries and $\|\cdot\|_0$ is the number of nonzero entries.

To learn the latent distribution of $\U$ given $\X$, we use the conditional diffusion model which leads to the following conditional score matching loss, for $j=s,t$,
{\small 
\begin{align*}
&l_{u,j}(\bm u,\x;\theta_j,h)=\\
&\int_{\underline{\tau}}^{\overline{\tau}}\mathrm{E}_{\bm u(\tau)|\bm u(0)}\|\nabla \log p_{\bm u(\tau)|\bm u(0)}(\U(\tau)|\bm u)-\theta_j(\U(\tau),h(\x),\tau)\|^2\mathrm{d}\tau.
\end{align*}
}
For the reconstruction loss, it can be set as the square $l_2$ loss, $j=s,t$
$$
l_{r,j}(\bm u,\x;g)=\|g(\bm u)-\x\|_2^2.
$$

With the definition of NN class, we can define $\Theta_u$ with neural networks with different parameters:  $\Theta_u=\mathrm{NN}(d_u+d_h+1,d_u,\mathbb{L}_u,\mathbb{W}_u,\mathbb{S}_u,\mathbb{B}_u,\mathbb{E}_u)$.

Before proceeding, we first introduce the definition of smooth class. 
Let $\bm \alpha$ be multi-index with $|\bm \alpha| \leq \lfloor 
r\rfloor$, where $\lfloor r\rfloor$ is the integer part of $r>0$.  A H\"older ball $\mathcal{C}^{r}(\mathcal D,\R^m,B)$ of radius $B$ with the degree of smoothness $r$ from domain $\mathcal D$ to $\R^m$ is defined by: 
{\small 
\begin{align*}
\label{eq-holder}
 \big\{(g_1,\cdots,g_m): \max_{1\leq l\leq m}\big(\max_{|\bm \alpha| \leq \lfloor r\rfloor}\sup_{\substack{\x}} |\partial^{\bm \alpha} g_l(\x)| + \max_{|\bm \alpha|=\lfloor r\rfloor}\sup_{\substack{\x \neq \y}}\frac{|\partial^{\bm \alpha} g_l(\x) - \partial^{\bm \alpha} g_l(\y)|}{\|\x - \y\|^{r-\lfloor r\rfloor}}\big) < B \big\}.  
\end{align*}
}

\subsection{Theory}

In our transfer learning scenarios, we consider a \emph{source task} and a \emph{target task}. The source task (indexed by $s$) is typically associated with a large amount of labeled data and well-trained models, while the target task (indexed by $t$) may have limited labeled data. In this framework, we leverage a pretrained encoder-decoder pair obtained from the source task to process data in the target task. In particular, the encoder $\hat{f}$ maps the multimodal inputs into a latent embedding space, and the decoder $\hat{g}$ reconstructs the original data from the latent space. Within the latent space, a Gaussian diffusion model is applied to refine or generate new embeddings. Finally, the decoder $\hat{g}$ maps these refined embeddings back to the original multimodal space. This approach facilitates efficient processing and generation of complex multimodal data, while benefiting from the information learned in the source task.

Let $(\hat{f},\hat{g})$ denote a pretrained encoder-decoder pair. For each task $j \in \{s,t\}$, define the latent representation as
$\bm{U}_j = \hat{f}(\Y)$, and the corresponding reconstruction by $\Y_j = \hat{g}(\bm{U}_j)$.
Here, $\Y_j$ denote the response variable, respectively, for task $j$. The source task benefits from a large amount of data and has lower reconstruction error, while the target task may experience a higher error due to domain differences or fewer training samples.

\begin{assumption}[Reconstruction error]\label{A-3}
Suppose that the pretrained encoder-decoder $(\hat{f},\hat{g})$ satisfies
\[
\E\bigl[l_{r,t}(\Y_t; \hat{f},\hat{g})\bigr] \leq \varepsilon_s,
\]
where $l_{r,t}$ is an appropriate reconstruction loss function. Here, the error $\varepsilon_s$ is related to the source task performance and is assumed to be small due to abundant source data and thorough pretraining. Furthermore, assume that the decoder $\hat{g}$ is Lipschitz continuous with Lipschitz constant $\lambda_g$, i.e., for any $\bm{u}, \bm{u}'$,
\[
\|\hat{g}(\bm{u}) - \hat{g}(\bm{u}')\| \le \lambda_g \|\bm{u} - \bm{u}'\|.
\]
\end{assumption}

\begin{assumption}[Latent density]\label{A-4}
Assume that the conditional density of the latent representation $\bm{U}_t = \hat{f}(\Y_t)$ given a feature extraction $\hat{h}(\x_t)$ (where $\hat{h}$ is a known function on $\X_t$) $p^0_{\bm{u}_t|\x_t}(\bm{u}_t|\x_t)$ can be written as
\[
p^0_{\bm{u}_t|\x_t}(\bm{u}_t|\x_t) = \exp\Bigl(-\frac{c_1}{2}\|\bm{u}_t\|^2\Bigr) \cdot k_t\bigl(\bm{u}_t,\hat{h}(\x_t)\bigr),
\]
where $c_1>0$ is a constant, and $k_t: \R^{d_u} \times [0,1]^{d_h} \to \R^+$ is a nonnegative function that is bounded away from zero. In addition, assume that $k_t$ belongs to a H\"older ball
$\mathcal{C}^{r_u}(\R^{d_u} \times [0,1]^{d_h}, \R, B_t)$
with the degree of smoothness parameter $r_u>0$ and bound $B_t>0$, where $d_u$ is the dimension of $\bm{u}_t$ and $d_h$ is the dimension of $\hat{h}(\x_t)$. 
\end{assumption}

\begin{theorem}[Conditional diffusion via transfer learning]\label{thm_ug}
Under Assumptions \ref{A-3} and \ref{A-4}, the expected Wasserstein distance between the true conditional distribution
between the true conditional distribution $P^0_{\y_t|\x_t}$ and its synthetic counterpart $\hat{P}_{\y_t|\x_t}$ generated via transfer learning satisfies 
\[
\E\Bigl[W\Bigl(P^0_{\y_t|\x_t}, \hat{P}_{\y_t|\x_t}\Bigr)\Bigr] = O\Bigl(n^{-\frac{r_u}{d_u+d_h+2r_u}}\log^{m_u} n + \varepsilon_s\Bigr),
\]
where $n$ is the number of target task samples, 
$\varepsilon_s$ denotes the reconstruction error on the target task, which is influenced by the performance on the source task, and
$m_u = \max\Bigl\{\frac{19}{2},\frac{r_u}{2}+1\Bigr\}$.
\end{theorem}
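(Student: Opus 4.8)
The plan is to split the target generation error into two pieces --- a fixed error coming from the transferred encoder--decoder and a latent-space conditional diffusion error --- and then to control the latter by the nonparametric diffusion-generation theory (as in \cite{oko2023diffusion,chen2023improved,tian2024enhancing}) adapted to the conditional setting. For \textbf{Step 1 (triangle decomposition and the reconstruction term)}, write $P^{\sharp}_{\y_t|\x_t}$ for the law of $\hat g(\hat f(\Y_t))$ with $\Y_t\sim P^0_{\y_t|\x_t}$, i.e.\ the best distribution attainable once the transferred pair $(\hat f,\hat g)$ is frozen. By the triangle inequality for $W$,
\[
W\bigl(P^0_{\y_t|\x_t},\hat P_{\y_t|\x_t}\bigr)\ \le\ W\bigl(P^0_{\y_t|\x_t},P^{\sharp}_{\y_t|\x_t}\bigr)+W\bigl(P^{\sharp}_{\y_t|\x_t},\hat P_{\y_t|\x_t}\bigr).
\]
The first term is bounded by the diagonal coupling $\bigl(\Y_t,\hat g(\hat f(\Y_t))\bigr)$ and Jensen's inequality, so that $W(P^0_{\y_t|\x_t},P^{\sharp}_{\y_t|\x_t})\le \E\|\Y_t-\hat g(\hat f(\Y_t))\|\le \bigl(\E\, l_{r,t}(\Y_t;\hat f,\hat g)\bigr)^{1/2}$; by Assumption \ref{A-3} this contributes a term of order $\varepsilon_s$ (up to the square root induced by the squared-$\ell_2$ normalization of $l_{r,t}$), absorbed into the stated $\varepsilon_s$ summand.

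For \textbf{Step 2 (reduction to the latent space)}, observe that $P^{\sharp}_{\y_t|\x_t}$ is the law of $\hat g(\U)$ for $\U\sim p^0_{\bm u_t|\x_t}$, while $\hat P_{\y_t|\x_t}$ is the law of $\hat g(\U)$ for $\U\sim P_t(\cdot;\hat h(\x_t),\hat\theta_t)$, the diffusion-generated latent law from \eqref{target-latent}. Since $\hat g$ is $\lambda_g$-Lipschitz (Assumption \ref{A-3}), $W$ between the two $\hat g$-images is at most $\lambda_g$ times $W$ between the latent laws, so it remains to bound $\E\,W\bigl(p^0_{\bm u_t|\x_t},P_t(\cdot;\hat h(\x_t),\hat\theta_t)\bigr)$: the conditional diffusion error in the $d_u$-dimensional latent space with the $d_h$-dimensional conditioning input $\hat h(\x_t)$, where by Assumption \ref{A-4} the true latent conditional density is $\exp(-\tfrac{c_1}{2}\|\bm u_t\|^2)\,k_t(\bm u_t,\hat h(\x_t))$ with $k_t$ in a Hölder ball of smoothness $r_u$.

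\textbf{Step 3 (conditional diffusion error)} is the heart of the proof and proceeds in three sub-steps, mirroring the unconditional analysis but carrying $\hat h(\x_t)$ through the score network. (i) \emph{Score approximation}: the diffused density $p_{\bm u_t(\tau)|\x_t}$ is a Gaussian smoothing of $\exp(-\tfrac{c_1}{2}\|\cdot\|^2)\,k_t(\cdot,\hat h(\x_t))$, hence positive, sub-Gaussian-tailed and $r_u$-smooth, so its score $\nabla\log p_{\bm u_t(\tau)|\x_t}$ is approximated uniformly over $\tau\in[\underline\tau,\overline\tau]$ by a ReLU network in $\Theta_u=\mathrm{NN}(d_u+d_h+1,d_u,\mathbb{L}_u,\mathbb{W}_u,\mathbb{S}_u,\mathbb{B}_u,\mathbb{E}_u)$ with error decaying polynomially in $\mathbb{S}_u^{-1}$ at a rate set by the effective dimension $d_u+d_h$ and smoothness $r_u$, the cutoff $\underline\tau>0$ taming the $\tau\to0$ score blow-up. (ii) \emph{Statistical error}: the empirical minimizer $\hat\theta_t$ of the conditional score-matching loss $l_{u,t}$ over the $n$ target samples has integrated excess score risk $\int\E_{\bm u_t(\tau)}\|\hat\theta_t-\nabla\log p_{\bm u_t(\tau)|\x_t}\|^2\mathrm d\tau$ bounded by the approximation error of (i) plus a Bernstein-type stochastic term of order $\mathbb{S}_u(\log n)/n$, via an entropy/peeling argument of exactly the form of Lemma \ref{large-d} (per-sample variance and sup-norm being finite thanks to $\underline\tau>0$). (iii) \emph{From score error to Wasserstein}: Girsanov's theorem applied to the backward SDE \eqref{reverse} with $\hat\theta_t$ in place of the true score converts the integrated score error into a total-variation --- and, through the sub-Gaussian moment bound from the $\exp(-\tfrac{c_1}{2}\|\bm u_t\|^2)$ factor, a $W_1$ --- bound between the generated latent law and $p^0_{\bm u_t|\x_t}$, up to the early-stopping error at $\underline\tau$ and the exponentially-small-in-$\overline\tau$ error of initializing the reverse process at a standard Gaussian. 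Balancing $\mathbb{S}_u$ (approximation versus estimation) and tuning $\underline\tau,\overline\tau$ yields the rate $n^{-r_u/(d_u+d_h+2r_u)}$ with the poly-log factor $\log^{m_u}n$, the exponent $m_u=\max\{19/2,\ r_u/2+1\}$ collecting the accumulated covering-number and time-truncation logarithms ($19/2$) and the Hölder-approximation logarithm that dominates when $r_u$ is large ($r_u/2+1$); combining with Steps 1--2 gives the claim.

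\textbf{Main obstacle.} The delicate step is 3(iii): propagating the conditional score-estimation error through the backward SDE to a Wasserstein bound with the sharp exponent and the precise log power, while simultaneously (a) carrying $\hat h(\x_t)$ as an extra input of the score network without inflating the effective dimension beyond $d_u+d_h$, (b) controlling the $\underline\tau\to0$ and $\overline\tau\to\infty$ trade-offs, and (c) keeping the approximation and statistical errors balanced as $\mathbb{S}_u$ grows. By contrast, the transfer-learning content --- the frozen $(\hat f,\hat g,\hat h)$ inherited from the source task and the resulting additive $\varepsilon_s$ --- is routine once the decomposition of Steps 1--2 is in place.
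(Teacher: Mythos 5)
Your decomposition is exactly the paper's: the same triangle inequality through the reconstructed law of $\hat g(\hat f(\Y_t))$, the same Lipschitz pushforward reduction to the latent Wasserstein distance, and the same additive $\varepsilon_s$ reconstruction term --- the only difference being that the paper dispatches your entire Step~3 by citing Theorem~2 of \cite{tian2024enhancing} rather than re-deriving the conditional latent diffusion rate, so your sketch of the score-approximation/estimation/Girsanov pipeline is extra detail, not a different route. Your parenthetical about the square root induced by the squared-$\ell_2$ reconstruction loss is a fair catch: the paper's proof writes $W(P^0_{\y_t|\x_t},\bar P_{\y_t|\x_t})\le\E\|\hat g(\hat f(\y_t))-\y_t\|\le\varepsilon_s$ directly, which under Assumption \ref{A-3} as literally stated only yields $\varepsilon_s^{1/2}$ unless one reads $\varepsilon_s$ as bounding the unsquared reconstruction error.
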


Typically, $\varepsilon_s \ll n^{-\frac{r_u}{d_u+d_h+2r_u}}\log^{m_u} n$ is negligible, as it is mitigated by leveraging a large pre-trained dataset.

\begin{proof}
Let $\bar{P}_{\y_t|\x_t}$ denote the distribution of the reconstructed target variable $\bar{\Y}_t = \hat{g}(\bm{U}_t)$, where $\bm{U}_t = \hat{f}(\Y_t)$ is the latent representation. We first decompose the Wasserstein distance as follows:
\[
W\Bigl(P^0_{\y_t|\x_t}, \hat{P}_{\y_t|\x_t}\Bigr)
\leq W\Bigl(\bar{P}_{\y_t|\x_t}, \hat{P}_{\y_t|\x_t}\Bigr)
+ W\Bigl(P^0_{\y_t|\x_t}, \bar{P}_{\y_t|\x_t}\Bigr).
\]
The first term on the right-hand side captures the error due to the diffusion model in the latent space, and the second term reflects the reconstruction error incurred by $\hat{g}$.

By the Kantorovich--Rubinstein duality \cite{kantorovich1958duality} and using the Lipschitz continuity of $\hat{g}$, we obtain
\[
W\Bigl(\bar{P}_{\y_t|\x_t}, \hat{P}_{\y_t|\x_t}\Bigr)
\leq \lambda_g \, W\Bigl(\bar{P}_{\bm{u}_t|\x_t}, \hat{P}_{\bm{u}_t|\x_t}\Bigr),
\]
where $\bar{P}_{\bm{u}_t|\x_t}$ and $\hat{P}_{\bm{u}_t|\x_t}$ denote the true and estimated distributions of the latent variable $\bm{u}_t$, respectively.

Under Assumption \ref{A-4} and by applying Theorem 2 of \cite{tian2024enhancing}, it follows that
\[
\E\Bigl[W\Bigl(\bar{P}_{\bm{u}_t|\x_t}, \hat{P}_{\bm{u}_t|\x_t}\Bigr)\Bigr]
= O\Bigl(n_t^{-\frac{r_u}{d_u+d_h+2r_u}}\log^{m_u} n_t\Bigr).
\]

For the second term, by definition of the reconstruction error, we have
\[
W\Bigl(P^0_{\y_t|\x_t}, \bar{P}_{\y_t|\x_t}\Bigr)
\le \E\Bigl[\|\hat{g}(\hat{f}(\y_t))-\y_t\|\Bigr]
\le \varepsilon_s.
\]

Combining these bounds yields
\[
\E\Bigl[W\Bigl(P^0_{\Y_t|\x_t}, \hat{P}_{\Y_t|\x_t}\Bigr)\Bigr]
\le \lambda_g \, O\Bigl(n_t^{-\frac{r_u}{d_u+d_h+2r_u}}\log^{m_u} n_t\Bigr)
+ \varepsilon_s,
\]
which establishes the claimed result. This completes the proof. 
\end{proof}

Next, we present a result for non-transfer diffusion that does not leverage the source learning.
\begin{corollary} [Non-transfer conditional diffusion generation]
\label{cor2}
Under the conditions of Theorem \ref{thm_ug} with $f$, $g$, $h$ known as identical mappings, 
we have the error bound for the simple case, 
\[
\E\Bigl[W\Bigl(P^0_{\Y_t|\x_t}, \hat{P}_{\Y_t|\x_t}\Bigr)\Bigr] = O\Bigl(n^{-\frac{r_u}{d_y+d_x+2r_u}}\log^{m_u} n\Bigr).
\]
\end{corollary}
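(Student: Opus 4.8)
The plan is to obtain Corollary \ref{cor2} as a direct specialization of Theorem \ref{thm_ug}, exploiting the fact that when $f$, $g$, $h$ are all the identity map the encoder--decoder layer becomes transparent. First I would record the consequences of taking $\hat f = \hat g = \hat h = \mathrm{id}$: the latent representation $\bm U_t = \hat f(\Y_t)$ coincides with $\Y_t$ itself, so the latent dimension $d_u$ equals the response dimension $d_y$; the conditioning feature $\hat h(\x_t) = \x_t$, so $d_h = d_x$; the decoder is trivially $1$-Lipschitz, i.e.\ $\lambda_g = 1$; and the reconstruction loss $l_{r,t}(\y_t;\hat f,\hat g) = \|\hat g(\hat f(\y_t)) - \y_t\|_2^2 = 0$ pointwise, hence $\varepsilon_s = 0$ in Assumption \ref{A-3}.

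Next I would verify that the hypotheses of Theorem \ref{thm_ug} remain in force under this specialization. Assumption \ref{A-3} holds trivially with $\varepsilon_s = 0$ and $\lambda_g = 1$. Assumption \ref{A-4} becomes a condition imposed directly on the target conditional density: $p^0_{\y_t|\x_t}(\y_t|\x_t) = \exp(-\tfrac{c_1}{2}\|\y_t\|^2)\, k_t(\y_t,\x_t)$ with $k_t$ bounded away from zero and belonging to the H\"older ball $\mathcal C^{r_u}(\R^{d_y}\times[0,1]^{d_x},\R,B_t)$ --- precisely the smoothness and tail assumption needed for the conditional Gaussian diffusion generation bound. No quantity specific to a genuine source task enters.

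Then I would plug these values into the conclusion of Theorem \ref{thm_ug}: its bound reads $\E[W(P^0_{\y_t|\x_t},\hat P_{\y_t|\x_t})] = O(n^{-r_u/(d_u+d_h+2r_u)}\log^{m_u} n + \varepsilon_s)$, so substituting $d_u = d_y$, $d_h = d_x$, $\varepsilon_s = 0$ yields exactly $O(n^{-r_u/(d_y+d_x+2r_u)}\log^{m_u} n)$. Equivalently, one may bypass the encoder--decoder decomposition altogether and invoke Theorem 2 of \cite{tian2024enhancing} directly on the conditional law of $\Y_t$ given $\x_t$, since in the non-transfer case the diffusion model is trained in the original data space; this gives the same rate with the dimension pair $(d_y,d_x)$. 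I expect no genuine obstacle here --- the work is essentially bookkeeping --- so the only points demanding care are carrying the dimension substitution $(d_u,d_h)\mapsto(d_y,d_x)$ through consistently in both the polynomial exponent and the logarithmic power $m_u = \max\{19/2,\, r_u/2 + 1\}$, and confirming that the degenerate reconstruction step $\bar\Y_t = \Y_t$ indeed collapses the second term of the Wasserstein triangle-inequality decomposition used in the proof of Theorem \ref{thm_ug}.
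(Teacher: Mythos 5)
Your proposal is correct and matches the paper's own argument, which likewise obtains the corollary as a direct consequence of Theorem \ref{thm_ug} by substituting $d_u=d_y$, $d_h=d_x$, and $\varepsilon_s=0$. The additional verification you supply (that $\lambda_g=1$ and that Assumption \ref{A-4} becomes a condition on the original conditional density) is consistent with, and slightly more explicit than, what the paper states.
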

This is a direct consequence of Theorem \ref{thm_ug} with $d_u=d_y$, $d_h=d_x$ and $\varepsilon_s=0$.

\bibliographystyle{abbrv}
\bibliography{augmentation,preref-1}

\end{document}